\documentclass{article}

\usepackage{microtype}
\usepackage{graphicx}
\usepackage{subcaption}
\usepackage{booktabs} 

\usepackage{hyperref}

\usepackage{xcolor}  

\usepackage[preprint]{icml2026}

\usepackage{amsmath}
\usepackage{amssymb}
\usepackage{mathtools}
\usepackage{amsthm}

\usepackage{booktabs}
\usepackage{multirow} 
\usepackage{graphicx}

\usepackage[capitalize,noabbrev]{cleveref}

\theoremstyle{plain}
\newtheorem{theorem}{Theorem}[section]

\newtheorem{lemma}[theorem]{Lemma}

\theoremstyle{definition}

\newtheorem{assumption}[theorem]{Assumption}
\theoremstyle{remark}
\newtheorem{remark}[theorem]{Remark}

\usepackage[textsize=tiny]{todonotes}

\icmltitlerunning{Submission and Formatting Instructions for ICML 2026}

\begin{document}
\pagestyle{plain} 
\thispagestyle{plain} 

\twocolumn[
  \icmltitle{State Rank Dynamics in Linear Attention LLMs}



  \icmlsetsymbol{equal}{*}

    \begin{icmlauthorlist}
    \icmlauthor{Ao Sun}{mt,cuhk,intern} 
    \icmlauthor{Hongtao Zhang}{ucas,cas}
    \icmlauthor{Heng Zhou}{ustc}
    \icmlauthor{Yixuan Ma}{hit}
    \icmlauthor{Yiran Qin}{ox}
    \icmlauthor{Tongrui Su}{mt,intern,ucas}
    \icmlauthor{Yan Liu}{mt,intern,thu}
    \icmlauthor{Zhanyu Ma}{mt}
    \icmlauthor{Jun Xu}{mt}
    \icmlauthor{Jiuchong Gao}{mt}
    \icmlauthor{Jinghua Hao}{mt}
    \icmlauthor{Renqing He}{mt}
    \end{icmlauthorlist}
    
    \icmlaffiliation{mt}{Meituan}
    \icmlaffiliation{cuhk}{The Chinese University of Hong Kong, Shenzhen}
    \icmlaffiliation{ucas}{University of Chinese Academy of Sciences}
    \icmlaffiliation{cas}{CAS Key Laboratory of AI Safety, Institute of Computing Technology, CAS}
    \icmlaffiliation{ustc}{University of Science and Technology of China}
    \icmlaffiliation{hit}{Harbin Institute of Technology}
    \icmlaffiliation{ox}{University of Oxford}
    \icmlaffiliation{thu}{Tsinghua University}
    \icmlaffiliation{intern}{Work done during internship at Meituan.} 

    \icmlcorrespondingauthor{Jun Xu}{xujun58@meituan.com}
    \icmlcorrespondingauthor{Jiuchong Gao}{gaojiuchong@meituan.com}

  \icmlkeywords{Machine Learning, ICML}

  \vskip 0.3in
]



\printAffiliationsAndNotice{}  

\begin{abstract}
Linear Attention Large Language Models (LLMs) offer a compelling recurrent formulation that compresses context into a fixed-size state matrix, enabling constant-time inference. However, the internal dynamics of this compressed state remain largely opaque. In this work, we present a comprehensive study on the runtime state dynamics of state-of-the-art Linear Attention models. We uncover a fundamental phenomenon termed State Rank Stratification, characterized by a distinct spectral bifurcation among linear attention heads: while one group maintains an effective rank oscillating near zero, the other exhibits rapid growth that converges to an upper bound. Extensive experiments across diverse inference contexts reveal that these dynamics remain strikingly consistent, indicating that the identity of a head,whether low-rank or high-rank,is an intrinsic structural property acquired during pre-training, rather than a transient state dependent on the input data. Furthermore, our diagnostic probes reveal a surprising functional divergence: low-rank heads are indispensable for model reasoning, whereas high-rank heads exhibit significant redundancy.  Leveraging this insight, we propose Joint Rank-Norm Pruning, a zero-shot strategy that achieves a 38.9\% reduction in KV-cache overhead while largely maintaining model accuracy. 
\end{abstract}



\begin{figure}
    \centering
    \includegraphics[width=1.0\linewidth]{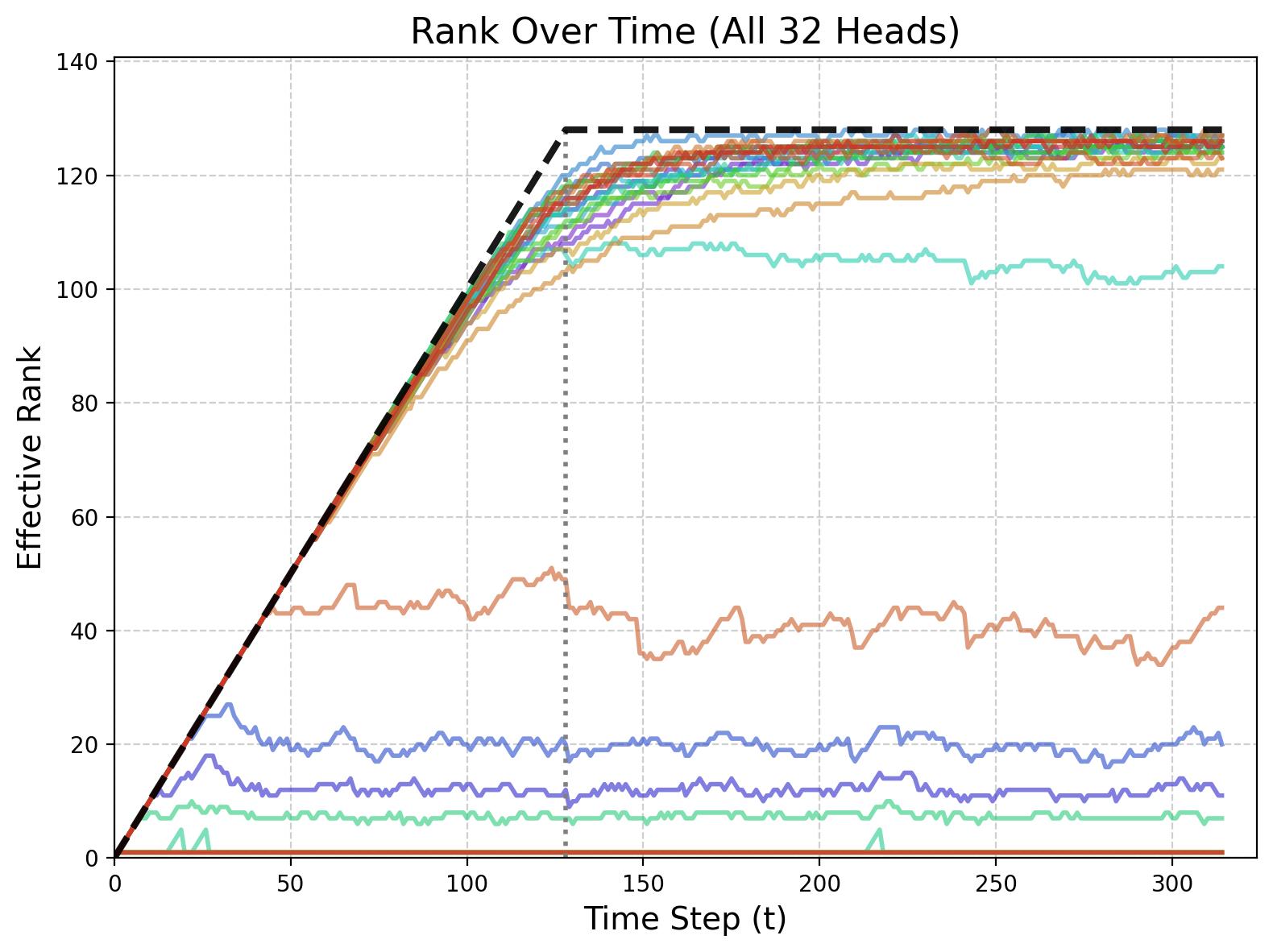}
    \caption{\textbf{State Rank Dynamics across 32 attention heads in a representative layer.} The dashed black line indicates the theoretical rank upper bound. The plot vividly illustrates State Rank Stratification, revealing a clear bifurcation between distinct high-rank and low-rank regimes.}
    \label{fig:layer_0_32heads_rank}
\end{figure}

\section{Introduction}
\label{sec:intro}

The Transformer\citep{vaswani2017attention} architecture has fundamentally revolutionized natural language processing, yet its standard self-attention mechanism suffers from quadratic computational complexity, creating a significant bottleneck for long-context applications. To reconcile high performance with efficiency, Linear Attention Large Language Models (LLMs)---exemplified by the state-of-the-art Qwen3-Next\citep{qwen3technicalreport} series (Instruct and Thinking)---have emerged as a compelling solution. By recurrently compressing unbounded history into a fixed-size state matrix $\mathbf{S}(t) \in \mathbb{R}^{d \times d}$, these models achieve linear scaling during training and constant memory usage during inference. However, despite its efficiency, the internal dynamics of this compressed state remain largely opaque, effectively operating as a black box where the evolution of information storage and retention is poorly understood.


To probe the internal mechanisms of the model, we conducted a systematic analysis of the state dynamics of attention heads within the linear attention layers of Qwen-Next. Our investigation yields a fundamental empirical observation, theoretically supported by mathematical induction: the rank dynamics of the state matrix are strictly constrained by an upper bound. Within this bound, we identify a striking Rank Stratification phenomenon. As illustrated in Figure~\ref{fig:layer_0_32heads_rank}, attention heads naturally bifurcate into two distinct regimes: \textit{High-Rank Heads}, which tightly track the theoretical upper bound to maximize capacity, and \textit{Low-Rank Heads}, which consistently operate in a low-dimensional manifold. 


Crucially, we demonstrate that this stratification exhibits Temporal Invariance. By calculating the cosine similarity between the rank vectors of a layer at time $t$ and $t+\Delta t$, we confirm that a head's role (high vs. low rank) remains stable throughout the generation process. Furthermore, to verify the Data Invariance of this distribution, we curate RankViz, a comprehensive dataset encompassing Mathematical Reasoning, STEM, Long-Context, and adversarial attack samples. Our analysis on RankViz reveals that, with the exception of specific attack patterns, the distribution of high and low-rank heads remains remarkably consistent across diverse domains.

Leveraging these insights, we propose Joint Rank-Norm Pruning, a novel, data-free inference acceleration strategy. Contrary to the intuition that higher rank implies higher importance, our diagnostic probes reveal that saturated High-Rank heads often exhibit significant redundancy. By exploiting the discovered stratification, we permanently prune these redundant high-rank heads without the need for retraining or auxiliary predictors. Extensive experiments demonstrate that our method significantly reduces computational overhead while maintaining the model's generative performance.

Our main contributions are summarized as follows:

\begin{enumerate}
    \item \textbf{Identification of Rank Stratification via Spectral Analysis:} We conduct the first systematic spectral analysis of the recurrent state matrix $\mathbf{S}_t$ in Linear Attention LLMs. Through this process, we identify a strict rank upper bound and the phenomenon of \textit{Temporal Invariance}, revealing that attention heads bifurcate into stable high-rank and low-rank regimes across diverse datasets.
    
    \item \textbf{Theoretical Analysis of State Dynamics:} Theoretical Foundation of Cache Dynamics. We establish a rigorous theoretical framework to elucidate the emergence of stable patterns in Linear Attention Transformers. By analyzing the recursive dynamics of rank-one updates, we derive two complementary theorems: we prove that the effective rank vector stabilizes via \textbf{\textit{Rank Saturation}} (as heads reach their intrinsic subspace capacities), and the nuclear norm vector maintains directional invariance via \textbf{\textit{Norm Accumulation}} (driven by the accumulative inertia of the growing cache, where the relative impact of new tokens diminishes). These results formally justify why attention heads exhibit strictly preserved characteristics throughout the inference process.
    
    

    \item \textbf{Efficient Inference via Mechanism-Aware Pruning:} We uncover the spectral root cause of Long-Context Collapse, demonstrating that preserving specific low-rank dynamics is essential for maintaining retrieval capabilities. Guided by this diagnostic insight, we propose Joint Rank-Norm Pruning, a training-free compression strategy. This approach effectively converts redundancy into efficiency, achieving a 38.9\% reduction in KV-cache memory usage while maintaining model accuracy comparable to the original baseline.
\end{enumerate}


\section{Preliminary}

\subsection{Standard Softmax Attention}
The foundation of modern Transformers is the scaled dot-product attention mechanism\citep{vaswani2017attention}. Given input projections for queries $\mathbf{Q}$, keys $\mathbf{K}$, and values $\mathbf{V}$ (all in $\mathbb{R}^{n \times d}$), the attention output is computed as:
\begin{equation}
    \text{Attn}(\mathbf{Q}, \mathbf{K}, \mathbf{V}) = \text{softmax}\left(\frac{\mathbf{Q}\mathbf{K}^\top}{\sqrt{d}} + \mathbf{M}\right)\mathbf{V},
    \label{eq:softmax_attn}
\end{equation}
where $\mathbf{M}$ denotes the additive causal mask (with $-\infty$ in the upper triangle) to prevent future information leakage during autoregressive generation.

\subsection{Linear Attention}
Linear attention aims to reduce computational complexity to linear time via kernel tricks or a recurrent view. From an \textit{online learning} perspective, these models maintain a recurrent state matrix $\mathbf{S}(t) \in \mathbb{R}^{d \times d}$ to store historical key--value associations. Depending on the update dynamics, we categorize them into two paradigms:

\paragraph{Standard Linear Attention: Correlation Accumulation.}
The most fundamental form treats state updates as simple Hebbian accumulation\citep{katharopoulos2020transformers}. Its implicit optimization objective is to maximize the alignment between the state and observed data, which corresponds to minimizing the following negative correlation loss:
\begin{equation}
    \mathcal{L}_t^{\text{Linear}}(\mathbf{S}) = -\langle \mathbf{S}^\top \mathbf{k}_t, \mathbf{v}_t \rangle.
    \label{eq:linear_loss}
\end{equation}
This yields the standard additive update rule:
\begin{equation}
    \mathbf{S}(t) = \mathbf{S}(t-1) + \mathbf{k}_t \mathbf{v}_t^\top, \quad \mathbf{o}_t = \mathbf{S}(t)^\top \mathbf{q}_t.
    \label{eq:linear_update}
\end{equation}
The absence of a forgetting mechanism in this formulation leads to unbounded state growth, introducing interference over long contexts.

\paragraph{DeltaNet: Reconstruction-based Correction.}
To mitigate unbounded growth, DeltaNet\citep{yang2024parallelizing,yang2024gated} reframes the state update as an online regression task. The goal is for the state matrix to accurately reconstruct the value $\mathbf{v}_t$ given the key $\mathbf{k}_t$. Consequently, the optimization objective shifts to minimizing the instantaneous reconstruction error:
\begin{equation}
    \mathcal{L}_t^{\text{Delta}}(\mathbf{S}) = \frac{1}{2} \| \mathbf{S}^\top \mathbf{k}_t - \mathbf{v}_t \|^2.
    \label{eq:delta_loss}
\end{equation}
Applying a single gradient descent step with learning rate $\beta_t$ on this loss yields the Delta Rule. This introduces an error-correction term, allowing the model to dynamically adjust its memory:
\begin{equation}
\begin{aligned}
    \mathbf{S}(t) &= \mathbf{S}(t-1) - \beta_t \nabla_{\mathbf{S}}\mathcal{L}_t \\
    &= (\mathbf{I} - \beta_t \mathbf{k}_t \mathbf{k}_t^\top)\mathbf{S}(t-1) + \beta_t \mathbf{k}_t \mathbf{v}_t^\top.
\end{aligned}
\label{eq:delta_update}
\end{equation}
This update mechanism endows the linear attention model with the capability to "forget" and "correct," representing a more robust rank-1 update strategy.




\subsection{Experimental Metric}
\paragraph{Effective Rank}
Directly computing the algebraic rank of the state matrix $\mathbf{S}(t) \in \mathbb{R}^{d \times d}$ is numerically unstable due to floating-point noise. To robustly quantify information content across varying energy scales, we adopt a relative threshold-based Effective Rank metric. Specifically, given the singular values sorted in descending order $\sigma_1 \ge \dots \ge \sigma_d \ge 0$, we define the effective rank as the count of singular values exceeding a fraction $\epsilon$ of the spectral norm (the largest singular value):
\begin{equation}
    \text{Rank}_{\text{eff}}(\mathbf{S}(t)) = \sum_{i=1}^d \mathbb{I}(\sigma_i > \epsilon \cdot \sigma_1),
\end{equation}
where $\mathbb{I}(\cdot)$ is the indicator function and $\sigma_1 = \|\mathbf{S}(t)\|_2$. In our experiments, we set the relative tolerance $\epsilon = 10^{-4}$. This strategy aligns with standard conventions in numerical linear algebra libraries (e.g., PyTorch, NumPy), ensuring that the rank estimation remains consistent independent of the absolute magnitude of the state matrix.

\paragraph{Cosine Similarity.}
To quantify the stability of head-specific characteristics across inference steps, we employ \textit{Cosine Similarity}. Unlike Euclidean distance, which is sensitive to the magnitude growth inherent in the KV cache accumulation, cosine similarity isolates the directional alignment of the feature vectors (e.g., the rank vector $\mathbf{r}(t)$ or norm vector $\mathbf{n}(t)$). Formally, for two non-zero vectors $\mathbf{u}, \mathbf{v} \in \mathbb{R}^H$, it is defined as:
\begin{equation}
\text{CosSim}(\mathbf{u}, \mathbf{v}) 
\;=\; 
\frac{\mathbf{u}^\top \mathbf{v}}{\|\mathbf{u}\|_2 \|\mathbf{v}\|_2}.
\label{eq:metric_cossim}
\end{equation}
A value close to $1$ implies that the relative distribution of the metric across attention heads remains invariant, regardless of the global scale increase.


\section{Identification of State Rank Dynamics}
\label{sec:stratification}

In this section, we systematically investigate the runtime information dynamics of Linear Attention LLMs. By tracking the spectral properties of the state matrix $\mathbf{S}(t)$ in Qwen-Next, we uncover a set of governing laws that reveal a fundamental functional bifurcation among attention heads.


We visualize the evolution of $\text{Rank}_{\text{eff}}(\mathbf{S}(t))$ (Figure~\ref{fig:layer_0_32heads_rank}). Our analysis proceeds from the basic physical constraints to the emergent structural properties, summarizing three key observations.

\subsection{The State Rank Upper Bound}
\label{sec:law_bound}

To understand the memory limitations of the state matrix $\mathbf{S}(t)$, we analyze its algebraic rank properties. We establish a formal proposition demonstrating that the rank of the memory state in both Standard Linear Attention and DeltaNet is inherently bounded by the sequence length and feature dimension.

\begin{theorem}[State Rank Upper Bound]\label{thm1}
Assuming the initial state $\mathbf{S}_0 = \mathbf{0}$, for both Standard Linear Attention and DeltaNet, the rank of the state matrix $\mathbf{S}(t) \in \mathbb{R}^{d \times d}$ at any time step $t$ satisfies:
\begin{equation}
    \text{rank}(\mathbf{S}(t)) \le \min(t, d)
\end{equation}
\end{theorem}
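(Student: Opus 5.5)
The plan is to prove the two inequalities $\text{rank}(\mathbf{S}(t)) \le d$ and $\text{rank}(\mathbf{S}(t)) \le t$ separately, and then combine them. The bound by $d$ is immediate: $\mathbf{S}(t)$ is a $d \times d$ matrix, so its rank can never exceed $d$. All the work is in the bound $\text{rank}(\mathbf{S}(t)) \le t$. I will prove it by induction on $t$, using two standard rank facts: subadditivity, $\text{rank}(\mathbf{A}+\mathbf{B}) \le \text{rank}(\mathbf{A}) + \text{rank}(\mathbf{B})$, and the product bound, $\text{rank}(\mathbf{A}\mathbf{B}) \le \min(\text{rank}(\mathbf{A}), \text{rank}(\mathbf{B}))$.

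The base case is $t=0$. Since $\mathbf{S}_0 = \mathbf{0}$, we have $\text{rank}(\mathbf{S}(0)) = 0 \le 0$.

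For the inductive step, assume $\text{rank}(\mathbf{S}(t-1)) \le t-1$.
\begin{itemize}
\item For Standard Linear Attention, the update \eqref{eq:linear_update} writes $\mathbf{S}(t)$ as the sum of $\mathbf{S}(t-1)$ and the outer product $\mathbf{k}_t\mathbf{v}_t^\top$. An outer product has rank at most $1$, so subadditivity gives $\text{rank}(\mathbf{S}(t)) \le (t-1) + 1 = t$.
\item For DeltaNet, the update \eqref{eq:delta_update} has two terms. The first is $(\mathbf{I} - \beta_t \mathbf{k}_t\mathbf{k}_t^\top)\mathbf{S}(t-1)$. Its rank is at most $\text{rank}(\mathbf{S}(t-1)) \le t-1$ by the product bound, and this holds whether or not the transition matrix is invertible. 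The second term, $\beta_t \mathbf{k}_t\mathbf{v}_t^\top$, has rank at most $1$. Subadditivity again gives $\text{rank}(\mathbf{S}(t)) \le t$.
\end{itemize}

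Combining the two inequalities yields $\text{rank}(\mathbf{S}(t)) \le \min(t,d)$.

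An alternative way to see the same bound is to unroll the recursion. This writes $\mathbf{S}(t) = \sum_{i=1}^t \mathbf{P}_i \mathbf{k}_i \mathbf{v}_i^\top$, where each $\mathbf{P}_i$ is a product of the transition matrices $(\mathbf{I}-\beta_j\mathbf{k}_j\mathbf{k}_j^\top)$; for linear attention every $\mathbf{P}_i = \mathbf{I}$. Each summand is an outer product, so $\mathbf{S}(t)$ is a sum of $t$ rank-$\le 1$ matrices, and the bound follows directly. I expect to present the induction as the main argument and mention the unrolled form only as intuition.

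There is no real obstacle here. The only point needing care is the DeltaNet case, where left-multiplying by the transition matrix must not increase rank. The product bound settles this, including the degenerate case $\beta_t\|\mathbf{k}_t\|^2 = 1$, in which the transition matrix becomes a singular projection; there the rank may drop, which is still consistent with an upper bound. Gated variants, which multiply the state by a scalar decay $\alpha_t$, are covered by the same argument, since scalar scaling does not increase rank.
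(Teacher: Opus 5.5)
Your proof is correct, and its skeleton matches the paper's: the trivial bound by $d$, induction from $\mathbf{S}_0=\mathbf{0}$, subadditivity, and an essentially identical linear-attention case. The difference is in the DeltaNet step.

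The paper never bounds the transition term on its own. It rewrites the update as the previous state plus a rank-one error-correction term, $\mathbf{S}(t)=\mathbf{S}(t-1)+\mathbf{e}_t\mathbf{k}_t^\top$ with $\mathbf{e}_t=\beta_t(\mathbf{v}_t-\mathbf{S}(t-1)\mathbf{k}_t)$. It then reuses the linear-attention argument verbatim. You instead apply $\mathrm{rank}(\mathbf{A}\mathbf{B})\le\mathrm{rank}(\mathbf{B})$ to $(\mathbf{I}-\beta_t\mathbf{k}_t\mathbf{k}_t^\top)\mathbf{S}(t-1)$.

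Your route needs no algebraic identity. It works for any transition matrix: gated decays, singular projections, and multiplication on either side. This robustness matters here. The appendix writes the transition acting on the left but rearranges as if it acted on the right. For the formula exactly as displayed there, the increment $\mathbf{S}(t)-\mathbf{S}(t-1)$ generally has rank two, so the paper's rewrite is not an identity in that setting. Your bound still goes through. With the main text's convention, the correct rewrite is $\mathbf{S}(t-1)+\beta_t\mathbf{k}_t(\mathbf{v}_t-\mathbf{S}(t-1)^\top\mathbf{k}_t)^\top$, which is again rank-one.

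What the paper's additive form buys, once stated consistently, is a stronger structural fact. Consecutive states differ by a rank-one matrix, so the rank changes by at most one per step in either direction. Your product bound only controls growth.
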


The proof is provided in Appendix \ref{sec:rank_upper_bound}.
\textbf{The Capacity Bottleneck.} The inequality $\text{rank}(\mathbf{S}(t)) \le \min(t, d)$ highlights a critical bottleneck in linear attention mechanisms. In the early phase where $t < d$, the model operates in a \textit{Rank Growth} regime, theoretically capable of memorizing discrete historical tokens perfectly as linearly independent updates. However, once $t \ge d$, the state matrix enters a \textit{Rank Saturation} regime. In this state, any new information $\mathbf{v}_{t+1} \mathbf{k}_{t+1}^\top$ must inevitably lie within the existing $d$-dimensional subspace, forcing a trade-off between overwriting historical memories and ignoring new inputs.

\begin{figure*}[t!]
    \centering
    \includegraphics[width=1.0\linewidth]{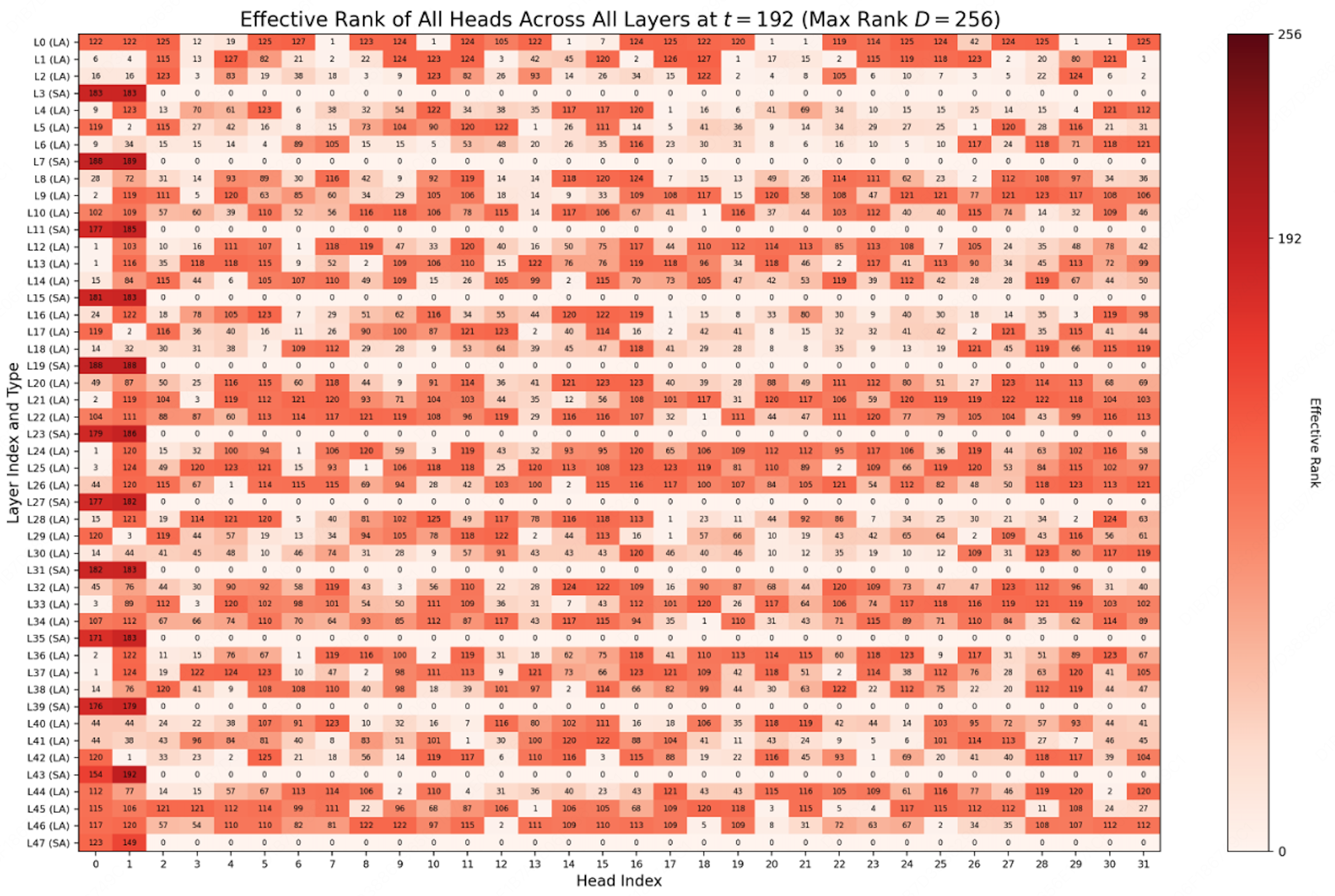}
    \caption{\textbf{Visualization of the Effective Rank for all 48 layers in Qwen3-Next.} Darker colors indicate a higher effective rank. The visualization reveals that the number of high-rank and low-rank heads is approximately equal within the linear attention layers, with both types appearing to be randomly distributed across all 48 layers. LA: Linear Attention layer; SA: Softmax Attention layer.}
    \label{fig:cross_all_layers}
\end{figure*}

\subsection{Temporal Consistency}
\label{sec:law_consistency}

A critical question arises regarding the nature of state stratification: Is a head's classification as low-rank or redundant a transient state dependent on the specific context of the current window, or is it an intrinsic, persistent trait of the head itself? To investigate this, we analyze the stability of head dynamics over widely separated time steps (e.g., comparing $t=128$ with $t=2048$) across the entire generation process. We formalize this stability through two distinct definitions of consistency:

\paragraph{Rank Consistency.}
This metric measures whether the relative complexity of information handled by different heads remains stable. We define the rank vector at step $t$ as $\mathbf{r}_t = [\mathrm{Rank}_{\text{eff}}^{(1)}(t), \dots, \mathrm{Rank}_{\text{eff}}^{(H)}(t)] \in \mathbb{R}^H$. We compute the Spearman rank correlation coefficient between rank vectors at different time steps, denoted as $\rho(\mathbf{r}_{t_1}, \mathbf{r}_{t_2})$.
Our empirical results demonstrate remarkably high correlations ($R > 0.90$) across most layers (Figure\ref{fig:temporal_consistency}). This indicates a strict Order Preservation: if Head A has a higher effective rank than Head B at the early stage of inference, it is statistically guaranteed to maintain this higher complexity throughout the sequence. This implies that the Redundant Heads are structurally destined to be low-rank, regardless of the input tokens.

\paragraph{Norm Consistency.}
While rank reflects the structural dimension of the state matrix, the nuclear norm $\|\mathbf{S}(t)\|_*$ reflects its energy or information magnitude. We similarly define the norm vector $\mathbf{n}_t = [\|\mathbf{S}(t)^{(1)}\|_*, \dots, \|\mathbf{S}(t)^{(H)}\|_*] \in \mathbb{R}^H$ and analyze the cosine similarity between norm vectors over time.
We observe that this similarity remains close to $1.0$ (typically $>0.98$) (Figure \ref{fig:temporal_consistency}). This Norm Consistency suggests that the relative signal strength of each head is locked in. Heads that accumulate high energy do so consistently, while those with minimal contribution do not suddenly surge in importance later in the sequence.

\noindent\textbf{Conclusion.} The confluence of Rank and Norm Consistency reveals a fundamental property of Linear Attention models: redundancy is a persistent trait, not a transient state. This stability is pivotal for efficiency, as it suggests that redundant heads can be reliably identified and pruned at the very early stages of inference without the need for continuous monitoring or re-evaluation.

\subsection{Data Independence}
\label{sec:law_independence}

Finally, we examine whether this stratification is sensitive to the input domain. Theoretically, the state matrix $\mathbf{S}(t)$ is a coupled function of the input sequence $\mathcal{X} = \{x_1, \dots, x_t\}$ and the model parameters $\theta = \{\mathbf{W}_K, \mathbf{W}_V\}$:
\begin{equation}
    \mathbf{S}(t)(\mathcal{X}; \theta) = \sum_{i=1}^t (\mathbf{W}_K x_i)(\mathbf{W}_V x_i)^\top.
    \label{eq:state_dependency}
\end{equation}
Given Equation \eqref{eq:state_dependency}, distinct data domains (e.g., natural language versus programming code) possess divergent feature distributions, which should theoretically drive the state matrix along different evolutionary trajectories. To investigate this, we replicate our spectral analysis across diverse datasets, including WikiText (natural language), GitHub (code), and arXiv (mathematical texts).

Strikingly, while the specific numerical entries of $\mathbf{S}(t)$ fluctuate significantly with the data distribution, the structural distribution of high-rank versus low-rank heads remains strictly invariant. The same set of heads consistently falls into the redundan cluster regardless of the input modality. This Data Independence indicates that Rank Stratification is not a dynamic reaction to simple versus complex inputs, but an intrinsic property encoded within the singular value spectra of the pre-trained weight matrices $\mathbf{W}_K$ and $\mathbf{W}_V$. This insight serves as the cornerstone of our proposed data-free pruning method, as it eliminates the need for calibration on domain-specific validation sets.

\subsection{Diagnostics with Synthetic Data Distributions}
\label{sec:special_data}
We examine the state dynamics under adversarial looping attacks, involving rare characters (Scenario I, Figure~\ref{fig:luanma1}) and common numbers (Scenario II, Figure~\ref{fig:luanma2}). 
In both cases, we observe significant deviations from the standard stratification pattern, characterized primarily by periodicity and rank reduction. 
Instead of converging to the theoretical upper bound, the effective rank in both scenarios exhibits cyclic oscillations mirroring the input repetition and suffers from a substantial contraction in magnitude. 
This suggests that regardless of the token type, repetitive input patterns prevent the state matrix from effectively accumulating information, forcing the attention heads into a constrained, oscillatory low-rank regime.

\section{Theoretical Analysis of Rank and Norm Preservation}
\subsection{Rank Preservation}
\label{sec:theoretical_analysis}

In this section, we analyze the dynamics of the KV cache rank during inference. We define the rank of head $h$ as $r_h(t) := \mathrm{Rank}_{\text{eff}}(\mathbf{S}_h(t))$ and the multi-head rank vector as $\mathbf{r}(t) := [r_1(t), \dots, r_H(t)] \in \mathbb{R}^H$.
Empirically, we observe that $\mathbf{r}(t)$ exhibits high directional preservation. To formalize this, we introduce the following assumption based on the properties of rank-one updates.

\begin{assumption}[Non-degenerate Monotone Rank Growth]
\label{ass:monotone_rank_main}
For all attention heads $h$ and inference steps $t$, the rank is non-decreasing, i.e., $\delta_h(t)=r_h(t) - r_h(t-1) \ge 0$.
\end{assumption}

Unlike the algebraic rank which is strictly integer-valued and non-decreasing for update additions, the effective rank is continuous and sensitive to the spectral distribution, allowing for local fluctuations.
However, as shown in Figure \ref{fig:layer_0_32heads_rank}, we empirically observe a stratification of heads into two distinct categories:
\begin{itemize}
    \item \textbf{Accumulating Heads:} These heads exhibit a sustained rank increase ($\delta_h(t) > 0$) until reaching the upper bound derived in Theorem \ref{thm1}, at which point the increment vanishes ($\delta_h(t) = 0$).
    \item \textbf{Saturated Heads:} These heads fluctuate around a specific stable value (bounded by their intrinsic capacity). For these heads, the long-term average increment is zero ($\mathbb{E}[\delta_h(t)] \approx 0$).
\end{itemize}

\begin{theorem}[Recursive Stability of Rank Vector Cosine Similarity]
\label{thm:recursive_cosine_main}
Let $\varepsilon \in (0, 1)$. Under Assumption \ref{ass:monotone_rank_main}, if the cosine similarity between the rank vectors at step $t-1$ and $t$ is sufficiently high, i.e.,
\begin{equation}
\cos\big(\mathbf{r}(t-1), \mathbf{r}(t)\big) \ge 1 - \varepsilon,
\end{equation}
then the similarity is preserved in the subsequent step:
\begin{equation}
\cos\big(\mathbf{r}(t), \mathbf{r}(t+1)\big) \ge 1 - \varepsilon.
\end{equation}
\end{theorem}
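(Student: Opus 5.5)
The plan is a perturbation argument on the increment. Write $\mathbf{r}(t+1)=\mathbf{r}(t)+\boldsymbol{\delta}(t+1)$ with $\boldsymbol{\delta}(t+1):=[\delta_1(t+1),\dots,\delta_H(t+1)]$. By Assumption~\ref{ass:monotone_rank_main}, $\boldsymbol{\delta}(t+1)\ge 0$ entrywise. Since all ranks are also nonnegative, $\mathbf{r}(t)^\top\boldsymbol{\delta}(t+1)\ge 0$ and $\|\mathbf{r}(t+1)\|_2\ge\|\mathbf{r}(t)\|_2$.

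\textbf{Step 1: reduce the cosine to a sine.} Split $\boldsymbol{\delta}=\boldsymbol{\delta}_\parallel+\boldsymbol{\delta}_\perp$ into components parallel and orthogonal to $\mathbf{r}(t)$. Then, with no approximation,
\begin{equation}
1-\cos^2\big(\mathbf{r}(t),\mathbf{r}(t+1)\big)=\frac{\|\boldsymbol{\delta}_\perp(t+1)\|_2^2}{\|\mathbf{r}(t+1)\|_2^2}.
\end{equation}
Because the parallel component is nonnegative, the cosine itself is nonnegative. So the hypothesis $\cos(\mathbf{r}(t-1),\mathbf{r}(t))\ge 1-\varepsilon$ is equivalent to the bound $\|\boldsymbol{\delta}_\perp(t)\|_2/\|\mathbf{r}(t)\|_2\le\eta(\varepsilon)$, where $\eta(\varepsilon):=\sqrt{1-(1-\varepsilon)^2}$. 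The goal therefore becomes the monotonicity statement
\begin{equation}
\frac{\|\boldsymbol{\delta}_\perp(t+1)\|_2}{\|\mathbf{r}(t+1)\|_2}\le\frac{\|\boldsymbol{\delta}_\perp(t)\|_2}{\|\mathbf{r}(t)\|_2}.
\end{equation}

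\textbf{Step 2: use the head structure.} To control $\boldsymbol{\delta}(t+1)$ I will use the stratification described before the theorem, partitioning heads into accumulating heads $A$ and saturated heads $A^c$.
\begin{itemize}
\item For saturated heads, I take $\delta_h=0$. Assumption~\ref{ass:monotone_rank_main} rules out negative fluctuations, and $\mathbb{E}[\delta_h]\approx 0$ then forces the increment to vanish.
\item For accumulating heads, each step is a rank-one update (eq.~\eqref{eq:linear_update} or eq.~\eqref{eq:delta_update}), so the algebraic rank moves by at most one. I will idealize the thresholded rank the same way, giving $\delta_h\in\{0,1\}$.
\item By Theorem~\ref{thm1}, $\delta_h=0$ once the head reaches $\min(t,d)$.
\end{itemize}
Hence $\boldsymbol{\delta}(t+1)=\mathbf{1}_{A_{t+1}}$, where $A_{t+1}\subseteq A_t$ is the shrinking set of heads still growing.

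\textbf{Step 3: conclude by cases.}
\begin{itemize}
\item If $A_{t+1}=\emptyset$, then $\mathbf{r}(t+1)=\mathbf{r}(t)$ and the cosine equals $1$.
\item Otherwise $\|\boldsymbol{\delta}(t+1)\|_2\le\|\boldsymbol{\delta}(t)\|_2$. Also $\mathbf{r}(t)$ has just moved further in the direction $\mathbf{1}_{A_t}\supseteq\mathbf{1}_{A_{t+1}}$, so the orthogonal part of the new increment does not increase: $\|\boldsymbol{\delta}_\perp(t+1)\|_2\le\|\boldsymbol{\delta}_\perp(t)\|_2$.
\end{itemize}
In the second case, combining this with $\|\mathbf{r}(t+1)\|_2\ge\|\mathbf{r}(t)\|_2$ gives the monotonicity in Step 1. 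Translating back through the identity yields $\cos(\mathbf{r}(t),\mathbf{r}(t+1))\ge 1-\varepsilon$.

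\textbf{Expected obstacle.} The hard part is the claim $\|\boldsymbol{\delta}_\perp(t+1)\|\le\|\boldsymbol{\delta}_\perp(t)\|$ when $A_{t+1}\subsetneq A_t$. When some heads saturate, the direction of the increment changes, and the new direction $\mathbf{1}_{A_{t+1}}$ could be less aligned with $\mathbf{r}(t)$ than $\mathbf{1}_{A_t}$ was.

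\begin{itemize}
\item \emph{Explicit check.} I will first try to verify this directly. Write $\mathbf{r}(t)=\mathbf{s}+\mathbf{g}(t)$, with $\mathbf{s}$ supported on $A^c$ and $\mathbf{g}$ on $A$. The entries of $\mathbf{g}$ on $A_{t+1}$ are the largest ones, since these heads are still tracking $\min(t,d)$. I will then compute $\|\mathbf{1}_{B}\|^2-(\mathbf{1}_{B}^\top\mathbf{r})^2/\|\mathbf{r}\|^2$ for $B=A_{t+1}$ versus $B=A_t$.
\item \emph{Fallback.} If this comparison fails in general, I will add an explicit growth condition as a hypothesis, namely $\|\boldsymbol{\delta}(t+1)\|_2\le\eta(\varepsilon)\,\|\mathbf{r}(t+1)\|_2$. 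Under that condition Step 1 gives the conclusion immediately. I will also argue that the condition holds for all $t$ beyond a burn-in, because the numerator is at most $\sqrt{H}$ while the denominator grows linearly in $t$ until saturation.
\end{itemize}
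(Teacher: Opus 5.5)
\textbf{There is a genuine gap, and it cannot be closed, because the statement is false as written.} Your Step~3 inequality $\|\boldsymbol\delta_\perp(t+1)\|_2\le\|\boldsymbol\delta_\perp(t)\|_2$ fails in exactly the situation you flagged, even under your idealized structure $\boldsymbol\delta(t+1)=\mathbf 1_{A_{t+1}}$ with $A_{t+1}\subseteq A_t$.

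Take $H=2$ and $d\ge 3$. Feed head~1 the pairs $\mathbf k_i=\mathbf v_i=\mathbf e_1,\mathbf e_2,\mathbf e_3$, and head~2 the pairs $\mathbf k_i=\mathbf v_i=\mathbf e_1,\mathbf e_2,\mathbf e_1$. Then $\mathbf r(1)=(1,1)$, $\mathbf r(2)=(2,2)$ and $\mathbf r(3)=(3,2)$. Algebraic rank and thresholded effective rank agree here, since the final singular values are $\{1,1,1\}$ and $\{2,1\}$. The ranks are monotone, and the increments are $\{0,1\}$-valued and element-wise non-increasing. Yet $\cos(\mathbf r(1),\mathbf r(2))=1$ while $\cos(\mathbf r(2),\mathbf r(3))=5/\sqrt{26}\approx 0.981$. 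So the implication fails for every $\varepsilon<1-5/\sqrt{26}\approx 0.019$. In your notation, $\boldsymbol\delta_\perp(2)=\mathbf 0$ but $\boldsymbol\delta_\perp(3)=(\tfrac12,-\tfrac12)$.

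The underlying reason is that a cosine hypothesis says nothing about the size of an increment aligned with the current vector. It therefore cannot control a later, misaligned increment.

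The paper's proof does not supply the missing idea either. Its Step~1 takes the hypothesis $\cos\ge 1-\varepsilon$ and the bound $\cos\ge A/(A+D)$ from Lemma~\ref{lem:cos_lower_bound_simple}, and concludes $A/(A+D)\ge 1-\varepsilon$. This does not follow, since two lower bounds on one quantity cannot be compared with each other. The same example refutes it: $A=D=\sqrt2$, so $A/(A+D)=\tfrac12$. The example also satisfies the conclusion of Lemma~\ref{lem:delta_monotone}, so the failure sits entirely in that step.

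Two further points are secondary but worth fixing. You, like Lemma~\ref{lem:delta_monotone}, assume that a head which stops growing never resumes. You also take $\mathbb E[\delta_h]\approx 0$ together with $\delta_h\ge 0$ to force $\delta_h=0$ at every step. Neither follows from the assumption: a head can stall while new keys fall in its current span, and then grow later.

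Your fallback is the right instinct, but be explicit that it proves a different theorem. Once you assume $\|\boldsymbol\delta(t+1)\|_2\le\eta(\varepsilon)\|\mathbf r(t+1)\|_2$, your Step~1 identity gives the conclusion without ever using $\cos(\mathbf r(t-1),\mathbf r(t))\ge 1-\varepsilon$. Two honest repairs are available.

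\emph{Recursive version.} Replace the cosine hypothesis by the stronger relative-step hypothesis $\|\boldsymbol\delta(t)\|_2\le\frac{\varepsilon}{1-\varepsilon}\|\mathbf r(t-1)\|_2$. Add $\|\boldsymbol\delta(t+1)\|_2\le\|\boldsymbol\delta(t)\|_2$ as an explicit assumption. The relative-step hypothesis then reproduces itself at the next step, because $\|\mathbf r(t)\|_2\ge\|\mathbf r(t-1)\|_2$. The paper's Steps~2--4 then correctly give $\cos(\mathbf r(t),\mathbf r(t+1))\ge 1-\varepsilon$.

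\emph{Direct version.} This is analogous to the paper's nuclear-norm theorem. Use $\delta_h\in\{0,1\}$ for algebraic rank together with Lemma~\ref{lem:cos_lower_bound_simple} to get $\cos(\mathbf r(t),\mathbf r(t+1))\ge\|\mathbf r(t)\|_2/(\|\mathbf r(t)\|_2+\sqrt H)$.

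Finally, your burn-in remark needs care. Under monotonicity alone, a head can sit at low rank for arbitrarily many steps and then grow, so $\|\mathbf r(t)\|_2$ need not grow linearly in $t$. Your growth condition does hold once every head that is still growing has rank at least $1/\eta(\varepsilon)$. That is a burn-in measured in rank, not in time.
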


A detailed proof is provided in Appendix \ref{sec:rank_preservation}
Theorem \ref{thm:recursive_cosine_main} provides a theoretical guarantee for our observation: once the rank distribution across heads stabilizes (high cosine similarity), it remains stable throughout the generation process. This suggests that the relative importance or complexity capacity of different heads is determined early and preserved.

\subsection{Norm Preservation}
\label{sec:norm_preservation}

In this section, we analyze the directional stability of the nuclear norm vector $\mathbf{n}(t) = (\|\mathbf{S}_1(t)\|_*, \dots, \|\mathbf{S}_H(t)\|_*) \in \mathbb{R}^H$.
Recall the update rule $\mathbf{S}_h(t) = \mathbf{S}_h(t-1) + \mathbf{k}_t^{(h)} (\mathbf{v}_t^{(h)})^\top$.
The evolution of the norm vector can be modeled as $\mathbf{n}(t) = \mathbf{n}(t-1) + \boldsymbol{\Delta}(t)$, where $\boldsymbol{\Delta}(t)$ represents the increment in nuclear norm at step $t$.

To derive the stability guarantee, we introduce a natural assumption regarding the boundedness of the input representations.

\begin{assumption}[Bounded Key/Value Magnitudes]
\label{ass:bounded_kv_main}
There exists a constant $B > 0$ such that the norm of the update vector in the nuclear norm space is bounded for all time steps $t$:
\begin{equation}
\|\boldsymbol{\Delta}(t)\|_2 
\;\le\; \sqrt{\sum_{h=1}^H \big(\|\mathbf{k}_t^{(h)}\|_2 \|\mathbf{v}_t^{(h)}\|_2\big)^2} 
\;\le\; B.
\end{equation}
\end{assumption}

This assumption follows directly from the fact that the increment in nuclear norm is Lipschitz continuous with respect to the rank-one update, bounded by the product of the key and value norms.
Based on this, we establish that the directional stability of $\mathbf{n}(t)$ is a consequence of the accumulation effect: as the total norm grows, the relative impact of new tokens diminishes.

\begin{theorem}[Directional Stability via Relative Step Size Decay]
\label{thm:norm_stability_main}
Let $r_t := \frac{B}{\|\mathbf{n}(t-1)\|_2}$ be the worst-case relative step size. Under Assumption \ref{ass:bounded_kv_main}, if $r_t < 1$, the cosine similarity between the nuclear norm vectors at adjacent steps is lower-bounded by:
\begin{equation}
\cos\big(\mathbf{n}(t-1), \mathbf{n}(t)\big) \;\ge\; \frac{1 - r_t}{1 + r_t}.
\label{eq:cos_bound_main}
\end{equation}
\end{theorem}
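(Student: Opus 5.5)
We prove the bound with an elementary perturbation argument for cosine similarity, applied to $\mathbf{a} := \mathbf{n}(t-1)$ and $\mathbf{b} := \mathbf{n}(t) = \mathbf{a} + \boldsymbol{\Delta}(t)$. By Assumption~\ref{ass:bounded_kv_main}, $\|\boldsymbol{\Delta}(t)\|_2 \le B = r_t\|\mathbf{a}\|_2$. Write $\boldsymbol{\Delta} := \boldsymbol{\Delta}(t)$. The hypothesis $r_t<1$ guarantees $\mathbf{b}\neq \mathbf{0}$, since $\|\mathbf{b}\|_2 \ge \|\mathbf{a}\|_2 - \|\boldsymbol{\Delta}\|_2 \ge (1-r_t)\|\mathbf{a}\|_2 > 0$, so the cosine is well defined. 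The bound is purely geometric; the rank-one structure of the update enters only through the assumption.

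First, lower-bound the numerator. By Cauchy--Schwarz,
\begin{equation}
\mathbf{a}^\top\mathbf{b} = \|\mathbf{a}\|_2^2 + \mathbf{a}^\top\boldsymbol{\Delta} \ge \|\mathbf{a}\|_2^2 - \|\mathbf{a}\|_2\|\boldsymbol{\Delta}\|_2 \ge (1-r_t)\|\mathbf{a}\|_2^2 .
\end{equation}
This quantity is positive because $r_t<1$.

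Second, upper-bound the denominator using the triangle inequality:
\begin{equation}
\|\mathbf{a}\|_2\|\mathbf{b}\|_2 \le \|\mathbf{a}\|_2\big(\|\mathbf{a}\|_2+\|\boldsymbol{\Delta}\|_2\big) \le (1+r_t)\|\mathbf{a}\|_2^2 .
\end{equation}

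Dividing the positive lower bound on the numerator by the positive upper bound on the denominator gives
\begin{equation}
\cos\big(\mathbf{n}(t-1),\mathbf{n}(t)\big) \ge \frac{1-r_t}{1+r_t},
\end{equation}
which is \eqref{eq:cos_bound_main}.

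The only delicate point is the middle inequality in Assumption~\ref{ass:bounded_kv_main}, namely $\|\boldsymbol{\Delta}\|_2 \le \sqrt{\sum_h(\|\mathbf{k}_t^{(h)}\|_2\|\mathbf{v}_t^{(h)}\|_2)^2}$. It follows componentwise from the reverse triangle inequality for the nuclear norm, $\big|\|\mathbf{S}_h(t)\|_*-\|\mathbf{S}_h(t-1)\|_*\big| \le \|\mathbf{k}_t^{(h)}(\mathbf{v}_t^{(h)})^\top\|_* = \|\mathbf{k}_t^{(h)}\|_2\|\mathbf{v}_t^{(h)}\|_2$, which holds because the rank-one update has a single nonzero singular value equal to that product. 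Since this is part of the assumption, we do not need it for the theorem.

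The main obstacle is therefore minimal. One remark: the sharper bound $\cos \ge \sqrt{1-r_t^2}$ is also available, via the angle between $\mathbf{a}$ and a point in a ball of radius $r_t\|\mathbf{a}\|_2$. It implies the stated bound because $(1-r_t)/(1+r_t)\le\sqrt{1-r_t^2}$. We stick with the direct ratio argument since it yields exactly the stated form.
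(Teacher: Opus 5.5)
Your proof is correct and follows the paper's argument. The paper proves the same perturbation lemma: a Cauchy--Schwarz lower bound $(1-r)\|\mathbf{u}\|_2^2$ on the inner product and a triangle-inequality upper bound $(1+r)\|\mathbf{u}\|_2$ on $\|\mathbf{u}+\boldsymbol{\delta}\|_2$, stated with the actual ratio $\|\boldsymbol{\Delta}(t)\|_2/\|\mathbf{n}(t-1)\|_2$ and then passed to the worst case $B/\|\mathbf{n}(t-1)\|_2$ via Assumption~\ref{ass:bounded_kv_main}. You instead substitute $\|\boldsymbol{\Delta}(t)\|_2 \le r_t\|\mathbf{a}\|_2$ directly, which is an immaterial difference, and your explicit positivity check on the numerator bound (needed to divide the two bounds) is a small point the paper leaves implicit.
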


Detailed derivations and the proof of the nuclear norm increment bound are provided in Appendix \ref{thm_norm_ore}
\begin{remark}[Asymptotic Stability]
Theorem \ref{thm:norm_stability_main} implies that as the inference proceeds, the accumulated norm $\|\mathbf{n}(t-1)\|_2$ typically grows (representing the total information capacity stored in the cache), causing the ratio $r_t$ to approach zero. Consequently, the lower bound in \eqref{eq:cos_bound_main} approaches $1$, theoretically explaining the empirically observed high cosine similarity.
\end{remark}

\begin{figure*}[t] 
    \centering
    
    \begin{subfigure}{0.23\textwidth}
        \includegraphics[width=\linewidth]{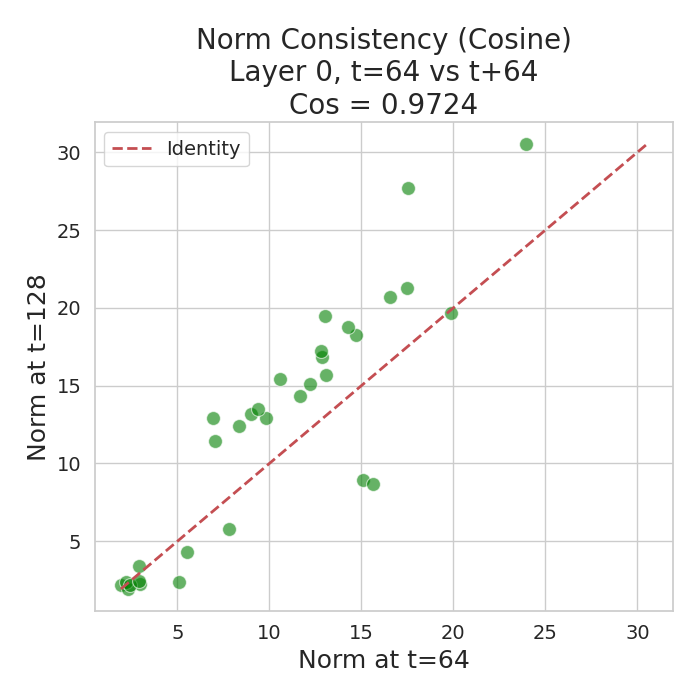}
        \caption{norm consistency between t64 and t128 in layer0}
    \end{subfigure}
    \hfill
    \begin{subfigure}{0.23\textwidth}
        \includegraphics[width=\linewidth]{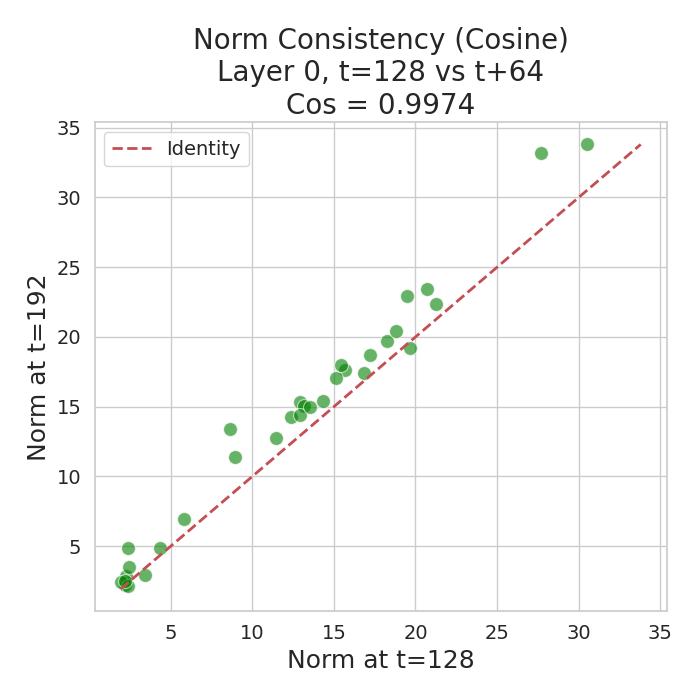}
        \caption{norm consistency between t128 and t192 in layer0}
    \end{subfigure}
    %
    \hspace{0.02\textwidth} 
    %
    \begin{subfigure}{0.23\textwidth}
        \includegraphics[width=\linewidth]{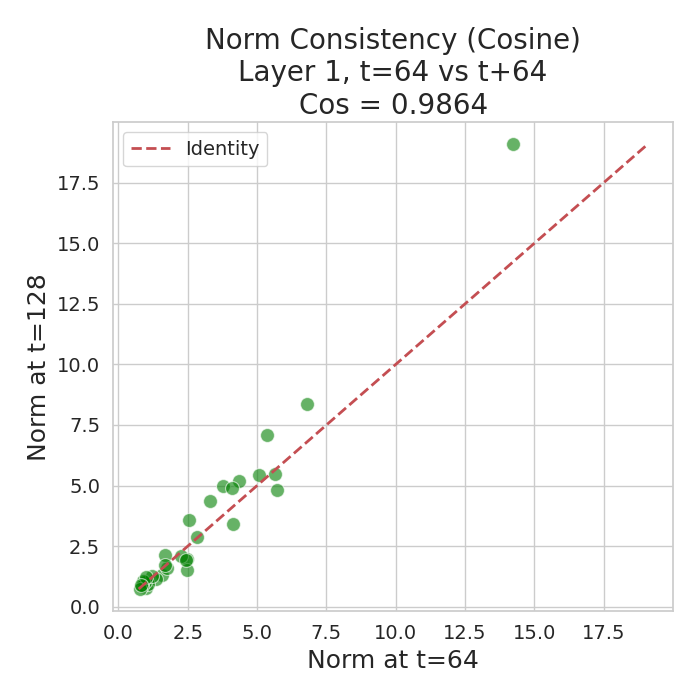}
        \caption{norm consistency between t64 and t128 in layer1}
    \end{subfigure}
    \hfill
    \begin{subfigure}{0.23\textwidth}
        \includegraphics[width=\linewidth]{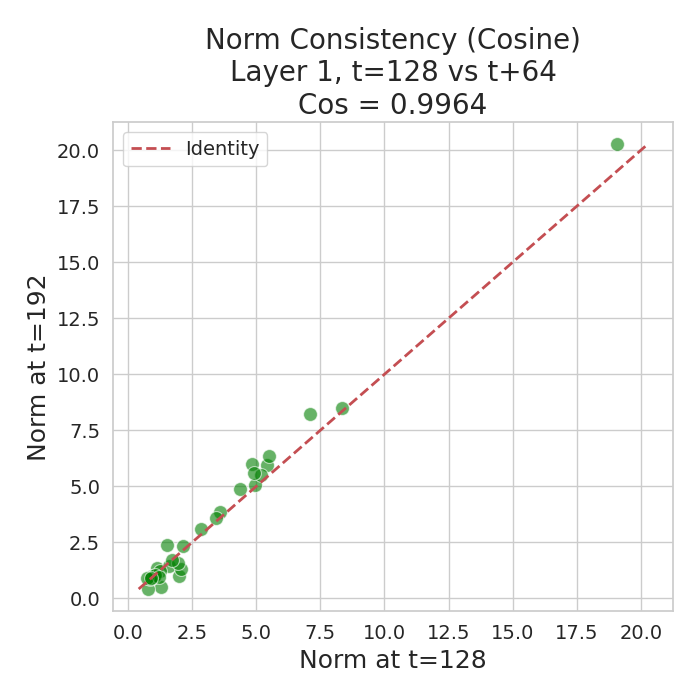}
        \caption{norm consistency between t128 and t192 in layer1}
    \end{subfigure}

    \par\bigskip 

    \begin{subfigure}{0.23\textwidth}
        \includegraphics[width=\linewidth]{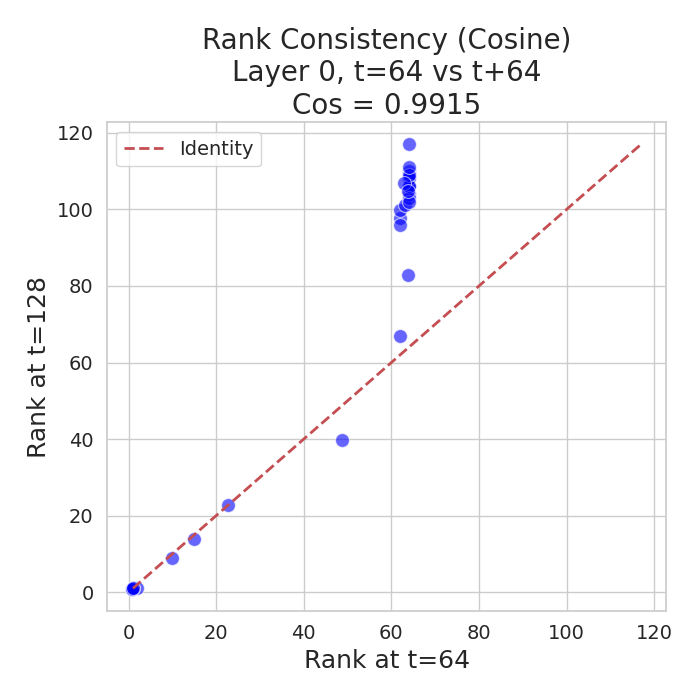}
        \caption{rank consistency between t64 and t128 in layer0}
    \end{subfigure}
    \hfill
    \begin{subfigure}{0.23\textwidth}
        \includegraphics[width=\linewidth]{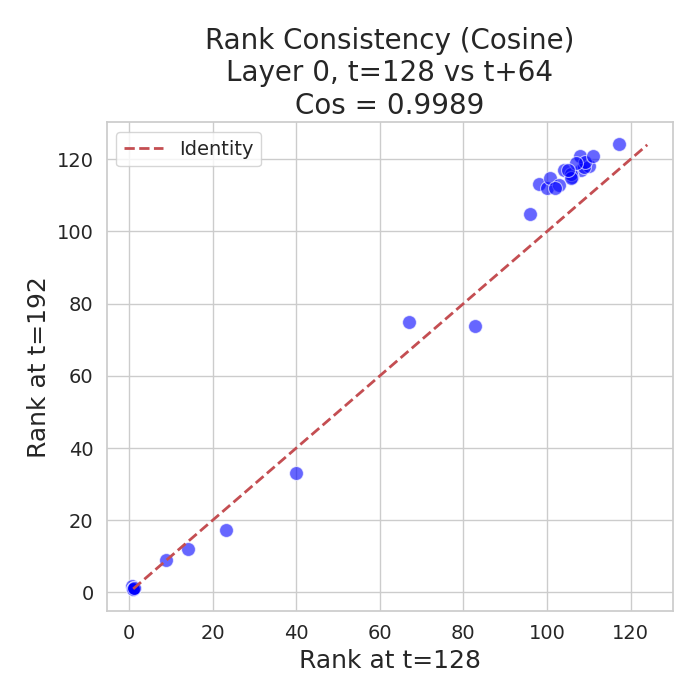}
        \caption{rank consistency between t128 and t192 in layer0}
    \end{subfigure}
    %
    \hspace{0.02\textwidth}
    %
    \begin{subfigure}{0.23\textwidth}
        \includegraphics[width=\linewidth]{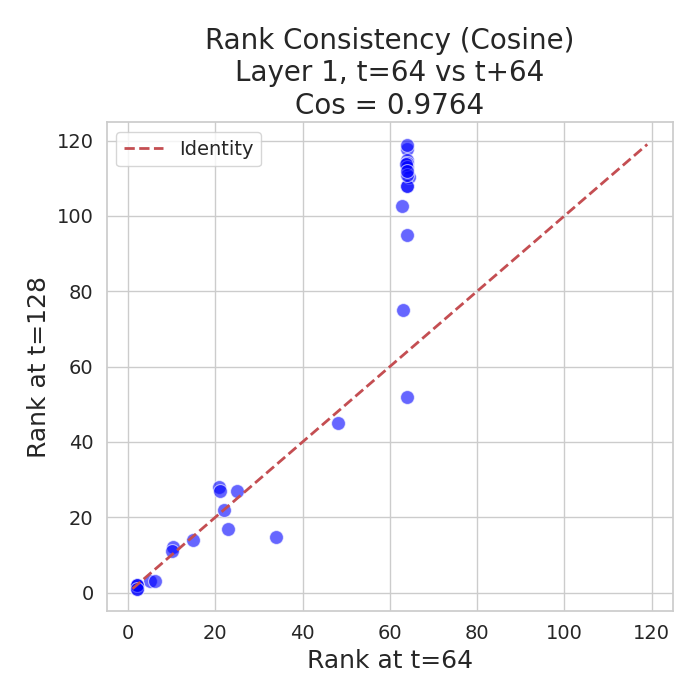}
        \caption{rank consistency between t64 and t128 in layer1}
    \end{subfigure}
    \hfill
    \begin{subfigure}{0.23\textwidth}
        \includegraphics[width=\linewidth]{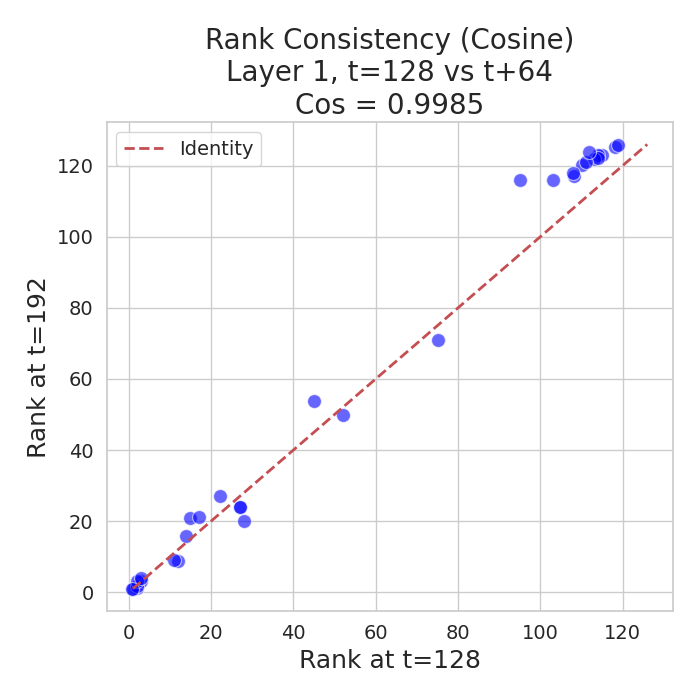}
        \caption{rank consistency between t128 and t192 in layer1}
    \end{subfigure}


    \caption{\textbf{Strong Temporal Consistency of State Norm and Rank.} 
    The scatter plots illustrate the stability of \textit{Nuclear Norm} (Top Row, \textbf{a-d}) and \textit{Effective Rank} (Bottom Row, \textbf{e-h}) across different time steps ($t$ vs. $t+\Delta t$) in representative layers. 
    Each point represents an attention head. The red dashed line denotes the identity mapping ($y=x$). 
    The extremely high cosine similarity scores ($\text{Cos\_Similarity} > 0.97$) and the tight alignment of points along the diagonal confirm the strong temporal stability and rank-order preservation of the state properties: heads with relatively higher rank or norm values maintain their magnitude relative to others throughout the inference process.}
    \label{fig:temporal_consistency}
\end{figure*}


\begin{table*}[t]
\centering
\setlength{\tabcolsep}{5pt}
\begin{tabular*}{\textwidth}{@{\extracolsep{\fill}}llccc}
\toprule
& & \multicolumn{3}{c}{\textbf{Task Categories (Accuracy \%)}} \\
\cmidrule(lr){3-5}
\textbf{Model} & \textbf{Method} & \textbf{Math-500} & \textbf{GSM8K} & \textbf{NIAH} \\
\midrule

\multirow{3}{*}{\textbf{Qwen3-Next-Instruct}} 
  & Origin                & 75.0 & 100.0 & 93.8 \\
  & Prune High-Rank       & 68.8 & 96.9 & 90.6 \\
  & Prune Low-Rank        & 25.0 & 75.0 & 46.9 \\
  
\midrule

\multirow{3}{*}{\textbf{Qwen3-Next-Thinking}} 
  & Origin                & 78.1 & 96.9 & 90.6 \\
  & Prune High-Rank       & 65.6 & 90.6 & 62.5 \\
  & Prune Low-Rank        & 25.0 & 75.0 & 43.8 \\

\bottomrule
\end{tabular*}
\caption{\textbf{Impact of Rank-based Pruning on Model Performance.} We evaluate the resilience of Qwen3-Next-Instruct and Qwen3-Next-Thinking variants across mathematical reasoning (Math-500, GSM8K) and long-context retrieval (NIAH) tasks. Across all benchmarks, the results demonstrate a stark performance asymmetry: while the models largely maintain accuracy when high-rank heads are removed, they suffer catastrophic performance degradation upon pruning low-rank heads, underscoring the indispensable role of low-rank regimes in linear attention states.}
\label{tab:pruning_ablation}
\end{table*}

\section{Joint Rank-Norm Pruning}
\label{sec:pruning}

How should we leverage the discovered Rank Stratification for efficient inference? Intuitively, one might assume that High-Rank heads, which utilize the full state capacity ($d=128$), are the primary carriers of information, while Low-Rank heads are candidates for pruning. 
However, our empirical investigation reveals a surprising paradox that challenges this conventional wisdom.

\subsection{Empirical Motivation: The High-Rank Redundancy}
To determine the functional importance of each group, we conducted a rigorous ablation study on the Qwen3-Next model using the \textit{Needle In A Haystack} (NIAH\citep{LLMTest_NeedleInAHaystack}) benchmark and mathematical reasoning tasks (MATH-500\citep{lightman2023letsverifystepstep,hendrycks2021measuringmathematicalproblemsolving} and GSM8K\citep{cobbe2021gsm8k}). We defined two experimental groups:
\begin{itemize}
    \item \textbf{High-Rank Group:} Heads with effective rank consistently near saturation ($r \approx d$).
    \item \textbf{Low-Rank Group:} Heads maintaining low effective rank ($r \ll d$) throughout inference.
\end{itemize}

As shown in Table~\ref{tab:pruning_ablation}, the results are striking:
\begin{itemize}
    \item \textbf{Pruning Low-Rank Heads:} Leads to catastrophic performance collapse. For instance, accuracy on the NIAH task drops precipitously from 93.8\% to 46.9\%. This identifies Low-Rank heads as \textit{indispensable} for retrieval and reasoning.
    \item \textbf{Pruning High-Rank Heads:} Results in negligible performance degradation (93.8\% $\to$ 90.6\%), despite removing heads that theoretically store more bits of information.
\end{itemize}

\textbf{Analysis.} We hypothesize that this phenomenon arises from State Saturation. High-Rank heads likely operate in an "over-saturated" regime where they attempt to store all incoming tokens indiscriminately, leading to noise accumulation and interference. In contrast, Low-Rank heads may implement selective attention mechanisms, maintaining a clean, low-dimensional subspace dedicated to specific retrieval patterns (e.g., induction heads).

\subsection{Methodology: Identifying Saturated Heads}
Based on this insight, we propose Joint Rank-Norm Pruning (JRNP), a strategy designed to identify and remove these redundant, over-saturated heads. We formulate a Saturation Score $\mathcal{S}_{h}$ to quantify the degree of redundancy for each head $h$:

\begin{equation}
    \mathcal{S}_{h} = \alpha \cdot \frac{\bar{r}_h}{d} + (1-\alpha) \cdot \frac{\bar{n}_h}{\max_j \bar{n}_j}
    \label{eq:score}
\end{equation}

where $\bar{r}_h$ and $\bar{n}_h$ are the average effective rank and nuclear norm. 
Under this metric, a \textbf{higher score indicates higher saturation/redundancy}. We target heads with the top-$k$\% scores for pruning, effectively removing the "noisy accumulators" while preserving the "clean specialists."

\subsection{Efficiency Gains and Robustness Verification}
By permanently pruning the heads identified as "saturated" (i.e., the High-Rank heads), we achieve a direct reduction in the KV-state memory footprint and recurrent update computations. 
Table~\ref{tab:pruning_ablation} empirically validates the safety of this strategy. 
Specifically, for Qwen3-Next-Instruct, removing High-Rank heads results in minimal performance degradation, maintaining 96.9\% accuracy on GSM8K and 90.6\% on NIAH (compared to 100.0\% and 93.8\% baseline). 
In stark contrast, pruning Low-Rank heads leads to a catastrophic collapse, with NIAH accuracy dropping to 46.9\%. 
This performance asymmetry strongly confirms our hypothesis: High-Rank heads represent redundant capacity that can be traded for efficiency, whereas Low-Rank heads serve as the indispensable backbone for reasoning and retrieval.

\subsection{Analysis of Long-Context Collapse}
Pruning Low-Rank heads triggers \textit{Long-Context Collapse}, where retrieval capability degrades to near-random levels (as evidenced by NIAH). This confirms that Low-Rank heads function as essential spectral anchors within low-dimensional subspaces to lock onto critical information, a property lacking in saturated High-Rank heads. See Appendix~\ref{app:case_study} for a concrete case study on the resulting reasoning degradation.


\section{Related Work}
\label{sec:related_work}

\textbf{Efficient Transformers and State Space Duality.}
While Transformers \citep{vaswani2017attention} dominate foundation models, their quadratic time and memory complexity $O(L^2)$ necessitate efficient alternatives for long-context modeling. Linear Attention (LA) mechanisms \citep{katharopoulos2020transformers} reduce this complexity to $O(L)$ by utilizing the kernel trick to rewrite attention as a Recurrent Neural Network (RNN). Concurrently, State Space Models (SSMs) like Mamba \citep{gu2024mamba} have evolved from structured matrices to selective gating mechanisms. Recently, \citet{dao2024transformers} established the theoretical framework of Structured State Space Duality (SSD), proving the mathematical equivalence between linear attention and SSMs under specific conditions. Our work builds upon modern Gated Linear Attention architectures, specifically Gated DeltaNet \citep{yang2024gated} and Qwen3-Next, which employ data-dependent gates to dynamically control memory updates. Unlike prior works that focus on architecture design, we analyze the internal dynamics of the recurrent state matrix $S_t$, treating it as the dual representation of the model's memory.

\textbf{The Low-Rank Dilemma and Memory Capacity.}
A fundamental limitation of condensing history into a fixed-size state $S_t \in \mathbb{R}^{D \times D}$ is the capacity bottleneck. \citet{fan2025breaking} formally identified the Low-Rank Dilemma, proving that standard linear attention updates tend to produce a low-rank state matrix, thereby restricting the model's expressivity compared to Full Rank Softmax Attention. Similarly, \citet{arora2024simple} highlighted the recall-throughput tradeoff, showing that low-dimensional states struggle with associative recall tasks, a finding corroborated by \citet{jelassi2024repeat} who demonstrated that Transformers outperform SSMs in copying tasks. Our work provides an empirical extension to these theoretical foundations. While \citet{fan2025breaking} suggest increasing head count to mitigate this bottleneck, we use spectral analysis to reveal that models spontaneously adopt a Rank Specialization strategy.


\textbf{Spectral Analysis in Deep Learning.}
Singular Value Decomposition (SVD) has long served as a powerful lens for interpreting the internal representations of deep neural networks. Early studies utilized spectral density analysis to investigate the implicit regularization and generalization capabilities of weight matrices \citep{martin2021implicit}. In the realm of model compression, low-rank approximations of weight matrices have been widely adopted to reduce redundancy without compromising performance \citep{denton2014exploiting}. More relevant to our context are investigations into the representation collapse or anisotropy problem in Transformers \citep{ethayarajh2019contextual, gao2019representation}, where token embeddings tend to degenerate into a narrow cone, limiting their expressivity. However, these works predominantly focus on \textit{static} parameter matrices or \textit{transient} token embeddings. Our work extends this spectral perspective to the \textit{dynamic recurrent state} $\mathbf{S}(t)$ of SSMs. We utilize Effective Rank not merely as a static metric of compressibility, but as a real-time indicator of memory accumulation dynamics, revealing how the model dynamically allocates its spectral budget over time.


\textbf{KV Cache Pruning in LLMs.} 
To mitigate memory bottlenecks, strategies like H2O \citep{zhang2023h2o} and StreamingLLM \citep{xiao2023efficient} leverage "heavy hitter" and "attention sink" phenomena for token eviction. Subsequent methods, including SnapKV \citep{li2024snapkv}, PyramidKV \citep{zhong2025zigzagkv}, and ShadowKV \citep{sun2024shadowkv}, further refine token-level sparsity or offloading. 
However, these heuristics fail in Linear Attention's recurrent paradigm, where context is compressed into fixed-size states. 
Our work bridges this gap by shifting from token-discarding to a \textit{state-centric} paradigm: we permanently prune redundant \textit{attention heads} based on \textit{State Rank Stratification}, utilizing the spectral properties of the recurrence.




\section{Conclusion}
\label{sec:conclusion}

In this work, we presented a systematic spectral analysis of recurrent state dynamics in Linear Attention LLMs, uncovering the phenomenon of \textbf{State Rank Stratification}. Our theoretical framework elucidates this emergence through two mechanisms: Rank Saturation, where heads reach subspace capacities, and Norm Accumulation, which ensures temporal invariance via cache inertia. Crucially, we identified that preserving specific low-rank dynamics is essential to preventing Long-Context Collapse. Leveraging these insights, we introduced Joint Rank-Norm Pruning (JRNP), a training-free strategy that achieves a 38.9\% reduction in KV-state memory with negligible performance loss. Our findings challenge the pursuit of high-rank state representations and suggest that future architectures should explicitly incorporate low-rank inductive biases to optimize the efficiency-retrieval trade-off.

\section{Discussion}
\label{sec:discussion}

While our work establishes the existence of State Rank Stratification and its utility in pruning, several directions remain for future exploration.

\textbf{Theoretical Mechanism of Invariance.} 
While we empirically verify that head roles are independent of data domains, a rigorous derivation remains to be established. We hypothesize this invariance originates from the singular value spectra of pre-trained weights ($W_K, W_V$), which act as spectral filters constraining state dynamics. Future work could leverage Random Matrix Theory to mathematically link static weight distributions to runtime rank bounds.

\textbf{Architectural Generalization.} 
Our analysis centers on Qwen3-Next. However, diverse sub-quadratic architectures (e.g., Mamba, RWKV, RetNet) possess distinct gating and normalization mechanisms that may modulate rank stratification. Expanding our spectral analysis to these families is crucial to validate the universality of our findings and adapt the Joint Rank-Norm Pruning strategy accordingly.

\clearpage

\section*{Impact Statement}
This paper presents work whose goal is to advance the field of Machine
Learning. There are many potential societal consequences of our work, none
which we feel must be specifically highlighted here.

\nocite{langley00}

\bibliography{example_paper}
\bibliographystyle{icml2026}

\newpage
\appendix
\onecolumn



\section{Rank Upper Bound}
\label{sec:rank_upper_bound}

In this section, we formally derive the upper bound of the rank of the state matrix $\mathbf{S}(t) \in \mathbb{R}^{d \times d}$. By the fundamental theorem of linear algebra, the condition $\mathrm{rank}(\mathbf{S}(t)) \le d$ holds trivially. It remains to prove that $\mathrm{rank}(\mathbf{S}(t)) \le t$.

\begin{theorem}[Rank Upper Bound for Linear Attention]
For both Standard Linear Attention and DeltaNet, the rank of the state matrix satisfies $\mathrm{rank}(\mathbf{S}(t)) \le \min(t, d)$ for all $t \ge 0$.
\end{theorem}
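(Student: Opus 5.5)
The bound $\mathrm{rank}(\mathbf{S}(t)) \le d$ is immediate, since $\mathbf{S}(t)$ is $d\times d$. So it suffices to prove $\mathrm{rank}(\mathbf{S}(t)) \le t$ for every $t \ge 0$. I will argue by induction on $t$, treating both update rules at once through the inequality
\begin{equation*}
\mathrm{rank}(\mathbf{A}\mathbf{S}(t-1) + \mathbf{k}_t\mathbf{v}_t^\top) \le \mathrm{rank}(\mathbf{A}\mathbf{S}(t-1)) + 1 \le \mathrm{rank}(\mathbf{S}(t-1)) + 1.
\end{equation*}
Here $\mathbf{A} = \mathbf{I}$ for Standard Linear Attention (Eq.~\eqref{eq:linear_update}). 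For DeltaNet (Eq.~\eqref{eq:delta_update}) we take $\mathbf{A} = \mathbf{I} - \beta_t \mathbf{k}_t\mathbf{k}_t^\top$ and rescale the rank-one term to $\beta_t \mathbf{k}_t \mathbf{v}_t^\top$.

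\textbf{Base case.} At $t=0$ we have $\mathbf{S}_0 = \mathbf{0}$, so its rank is $0 \le 0$.

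\textbf{Inductive step.} Assume $\mathrm{rank}(\mathbf{S}(t-1)) \le t-1$. Two standard facts are needed.
\begin{enumerate}
\item Subadditivity, $\mathrm{rank}(\mathbf{X}+\mathbf{Y}) \le \mathrm{rank}(\mathbf{X}) + \mathrm{rank}(\mathbf{Y})$. This holds because the column space of $\mathbf{X}+\mathbf{Y}$ lies in the sum of the two column spaces.
\item Submultiplicativity, $\mathrm{rank}(\mathbf{A}\mathbf{X}) \le \mathrm{rank}(\mathbf{X})$. This holds because every row of $\mathbf{A}\mathbf{X}$ is a combination of rows of $\mathbf{X}$.
\end{enumerate}
The outer product $\beta_t\mathbf{k}_t\mathbf{v}_t^\top$ has rank at most $1$. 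Combining the two facts gives $\mathrm{rank}(\mathbf{S}(t)) \le (t-1) + 1 = t$, which closes the induction. Together with the trivial bound $d$, this yields $\min(t,d)$.

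The only point needing care is the DeltaNet decay factor $\mathbf{I} - \beta_t\mathbf{k}_t\mathbf{k}_t^\top$. It may be singular, for instance when $\beta_t\|\mathbf{k}_t\|^2 = 1$, or it may be invertible. Fact 2 never requires invertibility, since it only uses that left multiplication cannot enlarge the row space, so this step goes through in either case.

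For the gated variants used in Qwen3-Next, the update is also scaled by a decay $\alpha_t$. Multiplying by a scalar or by any further matrix factor cannot raise the rank, for the same reason as Fact 2, so the argument extends verbatim. I will state this extension as a remark.
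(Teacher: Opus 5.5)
Your proof is correct, but it handles the DeltaNet case differently from the paper. The paper also inducts on $t$ using only subadditivity. For DeltaNet, however, it first rearranges the update into an additive rank-one correction, $\mathbf{S}(t) = \mathbf{S}(t-1) + \mathbf{e}_t\mathbf{k}_t^\top$ with $\mathbf{e}_t = \beta_t(\mathbf{v}_t - \mathbf{S}(t-1)\mathbf{k}_t)$. You instead keep the transition in multiplicative form and add the fact $\mathrm{rank}(\mathbf{A}\mathbf{X}) \le \mathrm{rank}(\mathbf{X})$.

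The paper's rewrite gives a sharper structural statement: the state moves by exactly a rank-one matrix per step, which is the error-correction reading of the delta rule. But that identity requires the decay factor and the write term to be oriented consistently. It holds for $(\mathbf{I}-\beta_t\mathbf{k}_t\mathbf{k}_t^\top)\mathbf{S}(t-1) + \beta_t\mathbf{k}_t\mathbf{v}_t^\top$, as in the main text. It fails for the mixed form $(\mathbf{I}-\beta_t\mathbf{k}_t\mathbf{k}_t^\top)\mathbf{S}(t-1) + \beta_t\mathbf{v}_t\mathbf{k}_t^\top$ written in the appendix. There $\mathbf{S}(t)-\mathbf{S}(t-1) = \beta_t\bigl(\mathbf{v}_t\mathbf{k}_t^\top - \mathbf{k}_t\mathbf{k}_t^\top\mathbf{S}(t-1)\bigr)$ is generically of rank two, so the displayed rearrangement is not an identity.

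Your argument is unaffected by this orientation issue and needs no invertibility. It also applies verbatim to any transition matrix, including scalar gates (even $\alpha_t = 0$), diagonal gates, and gated DeltaNet. The only thing it gives up is the explicit rank-one-increment description, which the bound itself does not need.
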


\begin{proof}
We proceed by mathematical induction.
\textbf{Base Case:} At $t=0$, $\mathbf{S}_0 = \mathbf{0}$, thus $\mathrm{rank}(\mathbf{S}_0) = 0 \le 0$.

\paragraph{Case 1: Standard Linear Attention.}
The update rule is given by $\mathbf{S}(t) = \alpha_t \mathbf{S}(t-1) + \beta_t \mathbf{v}_t \mathbf{k}_t^\top$.
\textbf{Inductive Step:} Assume $\mathrm{rank}(\mathbf{S}(t-1)) \le t-1$. Since scalar multiplication does not increase rank (for non-zero scalars) and the term $\mathbf{v}_t \mathbf{k}_t^\top$ is an outer product of two vectors (maximum rank of 1), using the subadditivity of matrix rank ($\mathrm{rank}(\mathbf{A}+\mathbf{B}) \le \mathrm{rank}(\mathbf{A}) + \mathrm{rank}(\mathbf{B})$), we derive:
\begin{equation}
    \begin{aligned}
        \mathrm{rank}(\mathbf{S}(t)) &= \mathrm{rank}(\alpha_t \mathbf{S}(t-1) + \beta_t \mathbf{v}_t \mathbf{k}_t^\top) \\
        &\le \mathrm{rank}(\mathbf{S}(t-1)) + \mathrm{rank}(\mathbf{v}_t \mathbf{k}_t^\top) \\
        &\le (t-1) + 1 \\
        &= t.
    \end{aligned}
\end{equation}

\paragraph{Case 2: DeltaNet.}
The update rule incorporating the delta rule is $\mathbf{S}(t) = (\mathbf{I} - \beta_t \mathbf{k}_t \mathbf{k}_t^\top)\mathbf{S}(t-1) + \beta_t \mathbf{v}_t \mathbf{k}_t^\top$. By rearranging terms, we can express this as an additive update:
\begin{equation}
    \mathbf{S}(t) = \mathbf{S}(t-1) + \beta_t (\mathbf{v}_t - \mathbf{S}(t-1) \mathbf{k}_t) \mathbf{k}_t^\top.
\end{equation}
Let the error vector be $\mathbf{e}_t = \beta_t (\mathbf{v}_t - \mathbf{S}(t-1) \mathbf{k}_t) \in \mathbb{R}^{d}$. The update simplifies to $\mathbf{S}(t) = \mathbf{S}(t-1) + \mathbf{e}_t \mathbf{k}_t^\top$.
Similar to Case 1, the term $\mathbf{e}_t \mathbf{k}_t^\top$ represents a rank-1 update. Applying the same inductive logic:
\begin{equation}
    \begin{aligned}
        \mathrm{rank}(\mathbf{S}(t)) &\le \mathrm{rank}(\mathbf{S}(t-1)) + \mathrm{rank}(\mathbf{e}_t \mathbf{k}_t^\top) \\
        &\le (t-1) + 1 \\
        &= t.
    \end{aligned}
\end{equation}
\end{proof}

\section{Rank Preservation}
\label{sec:rank_preservation}

We investigate the KV cache update of Linear Attention Transformers during the inference phase. For each attention head $h \in \{1, \dots, H\}$, the cache matrix $\mathbf{S}_h(t) \in \mathbb{R}^{d \times d}$ is updated recursively via a rank-one outer product:
\begin{equation}
\mathbf{S}_h(t) \;=\; \mathbf{S}_h(t-1) + \mathbf{k}^{(h)}_t \big(\mathbf{v}^{(h)}_t\big)^\top,\qquad \mathbf{S}_h(0)=\mathbf{0},
\label{eq:kv_update}
\end{equation}
where $\mathbf{k}^{(h)}_t, \mathbf{v}^{(h)}_t \in \mathbb{R}^{d}$ denote the key and value vectors for the $t$-th token in head $h$, respectively.
We define the rank of each head and the multi-head rank vector as:
\begin{equation}
r_h(t) \;:=\; \mathrm{rank}\big(\mathbf{S}_h(t)\big),\qquad 
\mathbf{r}(t) \;:=\; \big(r_1(t), \dots, r_H(t)\big) \in \mathbb{R}^{H}.
\label{eq:rank_vector_def}
\end{equation}
Empirically, we observe that the cosine similarity between rank vectors $\mathbf{r}(t)$ at different inference steps is typically very close to $1$. This phenomenon holds for both $t < d$ and $t \ge d$. Furthermore, we observe a stratification where some heads reach a global rank upper bound while others remain at a persistently low rank.

\paragraph{Intrinsic Rank Upper Bound per Head.}
For each head $h$, we define the spanning spaces of the keys and values:
\begin{equation}
\mathcal{K}_h(t) \coloneqq \mathrm{span}\{\mathbf{k}^{(h)}_1, \dots, \mathbf{k}^{(h)}_t\},\qquad
\mathcal{V}_h(t) \coloneqq \mathrm{span}\{\mathbf{v}^{(h)}_1, \dots, \mathbf{v}^{(h)}_t\},
\label{eq:kv_spans}
\end{equation}
and define the intrinsic upper bound for head $h$:
\begin{equation}
m_h \;:=\; \min\Big(\sup_{t}\dim \mathcal{K}_h(t),\ \sup_{t}\dim \mathcal{V}_h(t)\Big),
\qquad \text{thus } m_h \le d.
\label{eq:mh_def}
\end{equation}
When $m_h = d$, the head has the capacity to reach the global upper bound; when $m_h \ll d$, the rank of this head will saturate at a lower level.

\begin{lemma}[Structural Rank Bound via Key/Value Subspaces]
\label{lem:span_rank_bound}
For any head $h$ and any $t \ge 1$,
\begin{equation}
\mathrm{rank}\big(\mathbf{S}_h(t)\big) \le \min\big(\dim \mathcal{K}_h(t),\ \dim \mathcal{V}_h(t)\big) \le m_h \le d.
\label{eq:rank_le_mh}
\end{equation}
\end{lemma}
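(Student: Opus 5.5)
The plan is to unroll the recursion \eqref{eq:kv_update} from $\mathbf{S}_h(0)=\mathbf{0}$ and write the state as a single factored product. We get
\begin{equation}
\mathbf{S}_h(t) \;=\; \sum_{i=1}^t \mathbf{k}^{(h)}_i \big(\mathbf{v}^{(h)}_i\big)^\top \;=\; \mathbf{K}_h(t)\,\mathbf{V}_h(t)^\top,
\end{equation}
where $\mathbf{K}_h(t)=[\mathbf{k}^{(h)}_1,\dots,\mathbf{k}^{(h)}_t]\in\mathbb{R}^{d\times t}$ and $\mathbf{V}_h(t)=[\mathbf{v}^{(h)}_1,\dots,\mathbf{v}^{(h)}_t]\in\mathbb{R}^{d\times t}$. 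This identity follows from a one-line induction on $t$, mirroring the inductive structure in the proof of Theorem~\ref{thm1}.

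Next I bound the rank through the column and row spaces. Every column of $\mathbf{S}_h(t)$ is a linear combination of $\mathbf{k}^{(h)}_1,\dots,\mathbf{k}^{(h)}_t$, so $\mathrm{col}(\mathbf{S}_h(t))\subseteq\mathcal{K}_h(t)$. This gives $\mathrm{rank}(\mathbf{S}_h(t))\le\dim\mathcal{K}_h(t)$. In the same way, every row lies in $\mathcal{V}_h(t)$, so the row-space bound gives $\mathrm{rank}(\mathbf{S}_h(t))\le\dim\mathcal{V}_h(t)$. Equivalently, one can apply $\mathrm{rank}(\mathbf{A}\mathbf{B})\le\min(\mathrm{rank}\mathbf{A},\mathrm{rank}\mathbf{B})$ to the factorization, using $\mathrm{rank}\,\mathbf{K}_h(t)=\dim\mathcal{K}_h(t)$ and $\mathrm{rank}\,\mathbf{V}_h(t)=\dim\mathcal{V}_h(t)$. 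Together these give the first inequality in \eqref{eq:rank_le_mh}.

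For the remaining inequalities, I note that the spans are nested in $t$, so $\dim\mathcal{K}_h(t)\le\sup_s\dim\mathcal{K}_h(s)$, and likewise for $\mathcal{V}_h$. Taking the minimum of both sides preserves the inequality, which gives $\min(\dim\mathcal{K}_h(t),\dim\mathcal{V}_h(t))\le m_h$. Finally, $m_h\le d$ because both spans are subspaces of $\mathbb{R}^d$.

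No step here is a genuine obstacle. The only point needing care is that the lemma is stated for the pure additive update \eqref{eq:kv_update}. If one wanted the analogous statement for DeltaNet, the column space would still lie in $\mathcal{K}_h(t)$, because the updates are $\mathbf{e}_t\mathbf{k}_t^\top$-type terms together with left multiplication by $(\mathbf{I}-\beta_t\mathbf{k}_t\mathbf{k}_t^\top)$ in the appropriate orientation. The value-side bound, however, would fail. I would therefore keep the proof restricted to \eqref{eq:kv_update}, as stated.
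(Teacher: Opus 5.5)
Your proof is correct and is essentially the paper's argument: the paper likewise shows $\mathrm{Col}(\mathbf{S}_h(t))\subseteq\mathcal{K}_h(t)$ by writing $\mathbf{S}_h(t)\mathbf{x}$ as a combination of the keys, and gets $\mathrm{Row}(\mathbf{S}_h(t))\subseteq\mathcal{V}_h(t)$ by transposing, while you also make explicit the $\le m_h\le d$ step that the paper leaves implicit. One correction to your closing aside, which does not affect the lemma: for the delta rule as derived in \eqref{eq:delta_update}, left multiplication by $(\mathbf{I}-\beta_t\mathbf{k}_t\mathbf{k}_t^\top)$ cannot enlarge the row space, so $\mathrm{Row}(\mathbf{S}(t))\subseteq\mathrm{Row}(\mathbf{S}(t-1))+\mathrm{span}\{\mathbf{v}_t\}$ and the value-side bound holds alongside the key-side one; the failure you describe only appears when the decay factor and the write term are oriented inconsistently, as in the form $(\mathbf{I}-\beta_t\mathbf{k}_t\mathbf{k}_t^\top)\mathbf{S}(t-1)+\beta_t\mathbf{v}_t\mathbf{k}_t^\top$ used in the appendix proof of Theorem~\ref{thm1}.
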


\begin{proof}
From \eqref{eq:kv_update}, $\mathbf{S}_h(t) = \sum_{i=1}^t \mathbf{k}^{(h)}_i (\mathbf{v}^{(h)}_i)^\top$.
For any $\mathbf{x} \in \mathbb{R}^d$,
\begin{equation}
\mathbf{S}_h(t)\mathbf{x} = \sum_{i=1}^t \mathbf{k}^{(h)}_i \big((\mathbf{v}^{(h)}_i)^\top \mathbf{x}\big).
\end{equation}
The RHS is a linear combination of $\{\mathbf{k}^{(h)}_1, \dots, \mathbf{k}^{(h)}_t\}$, implying $\mathrm{Col}(\mathbf{S}_h(t)) \subseteq \mathcal{K}_h(t)$. Thus, $\mathrm{rank}(\mathbf{S}_h(t)) \le \dim\mathcal{K}_h(t)$.
Similarly, considering $\mathbf{S}_h(t)^\top = \sum_{i=1}^t \mathbf{v}^{(h)}_i (\mathbf{k}^{(h)}_i)^\top$, we have $\mathrm{Row}(\mathbf{S}_h(t)) \subseteq \mathcal{V}_h(t)$, implying $\mathrm{rank}(\mathbf{S}_h(t)) \le \dim\mathcal{V}_h(t)$.
Combining these yields \eqref{eq:rank_le_mh}.
\end{proof}

\paragraph{Rank Increment and Recursion.}
We define the rank increment for each head and the increment vector as:
\begin{equation}
\delta_h(t) \;:=\; r_h(t) - r_h(t-1),\qquad
\boldsymbol\delta(t) \coloneqq \big(\delta_1(t), \dots, \delta_H(t)\big) \in \mathbb{R}^H.
\label{eq:delta_def}
\end{equation}
The rank vector thus satisfies the recursion:
\begin{equation}
\mathbf{r}(t) = \mathbf{r}(t-1) + \boldsymbol\delta(t).
\label{eq:r_recursion}
\end{equation}

\begin{lemma}[Step-wise Rank Growth Bound]
\label{lem:rank_step_bound}
For any head $h$ and any $t \ge 1$,
\begin{equation}
r_h(t) \le r_h(t-1) + 1.
\label{eq:rank_plus_one}
\end{equation}
\end{lemma}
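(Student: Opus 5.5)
The plan is to read the bound directly off the update rule \eqref{eq:kv_update}, combined with the subadditivity of matrix rank. This is the same mechanism already used in the inductive step of the proof of Theorem~\ref{thm1} in Appendix~\ref{sec:rank_upper_bound}, except that no induction is needed here. Fix a head $h$ and a step $t \ge 1$, and write
\begin{equation}
\mathbf{S}_h(t) = \mathbf{S}_h(t-1) + \mathbf{k}^{(h)}_t \big(\mathbf{v}^{(h)}_t\big)^\top .
\end{equation}
The key steps are as follows.

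\textbf{Step 1: the update term has rank at most one.} Its column space is contained in $\mathrm{span}\{\mathbf{k}^{(h)}_t\}$. Equivalently, for every $\mathbf{x}$,
\begin{equation}
\mathbf{k}^{(h)}_t (\mathbf{v}^{(h)}_t)^\top \mathbf{x} = \big((\mathbf{v}^{(h)}_t)^\top \mathbf{x}\big)\,\mathbf{k}^{(h)}_t .
\end{equation}
So its rank is $0$ if either vector vanishes and $1$ otherwise.

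\textbf{Step 2: apply subadditivity.} Use $\mathrm{rank}(\mathbf{A}+\mathbf{B}) \le \mathrm{rank}(\mathbf{A}) + \mathrm{rank}(\mathbf{B})$. This follows from
\begin{equation}
\mathrm{Col}(\mathbf{A}+\mathbf{B}) \subseteq \mathrm{Col}(\mathbf{A}) + \mathrm{Col}(\mathbf{B})
\end{equation}
together with the dimension bound for a sum of subspaces. With $\mathbf{A}=\mathbf{S}_h(t-1)$ and $\mathbf{B}$ the outer product, this gives $r_h(t) \le r_h(t-1) + 1$, which is \eqref{eq:rank_plus_one}.

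\textbf{Step 3: variants.} I would add a brief remark that the same argument covers the gated and DeltaNet forms treated in Appendix~\ref{sec:rank_upper_bound}. For DeltaNet, the rearrangement $\mathbf{S}(t)=\mathbf{S}(t-1)+\mathbf{e}_t\mathbf{k}_t^\top$ is again a rank-one additive correction. For the gated form, a nonzero scalar $\alpha_t$ leaves the rank unchanged, and $\alpha_t = 0$ only lowers it. The lemma as stated concerns \eqref{eq:kv_update}, so this remark is optional.

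There is no substantive obstacle. The only care needed is to state clearly that $r_h$ here denotes algebraic rank, as in \eqref{eq:rank_vector_def}, and not the thresholded $\mathrm{Rank}_{\text{eff}}$. For the effective rank, the bound can fail: a rank-one perturbation can push several singular values across the relative threshold $\epsilon\sigma_1$ at once, for example by decreasing $\sigma_1$. So the lemma should be invoked only for algebraic rank.
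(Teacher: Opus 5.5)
Your proposal is correct and matches the paper's proof: subadditivity of rank applied to $\mathbf{S}_h(t-1)$ plus the outer product $\mathbf{k}^{(h)}_t(\mathbf{v}^{(h)}_t)^\top$, which has rank at most one. Treating the degenerate case directly (rank $0$ when either vector vanishes) removes the paper's nonzero-vector assumption, and your caveat that the bound holds for algebraic rank but can fail for $\mathrm{Rank}_{\text{eff}}$ is accurate.
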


\begin{proof}
By the subadditivity of rank:
\begin{equation}
\mathrm{rank}\big(\mathbf{S}_h(t)\big)
= \mathrm{rank}\big(\mathbf{S}_h(t-1) + \mathbf{k}^{(h)}_t(\mathbf{v}^{(h)}_t)^\top\big)
\le \mathrm{rank}\big(\mathbf{S}_h(t-1)\big) + \mathrm{rank}\big(\mathbf{k}^{(h)}_t(\mathbf{v}^{(h)}_t)^\top\big).
\end{equation}
Assuming $\mathbf{k}^{(h)}_t \neq \mathbf{0}$ and $\mathbf{v}^{(h)}_t \neq \mathbf{0}$, the outer product rank is 1, yielding \eqref{eq:rank_plus_one}.
\end{proof}

We adopt the following assumption, consistent with empirical observations:

\begin{assumption}[Non-degenerate Monotone Rank Growth]
\label{ass:monotone_rank}
For all heads $h$ and steps $t$, $r_h(t) \ge r_h(t-1)$.
\end{assumption}

Assumption \ref{ass:monotone_rank} and Lemma \ref{lem:rank_step_bound} imply $\delta_h(t) \in \{0, 1\}$.
Furthermore, by Lemma \ref{lem:span_rank_bound}, $r_h(t) \le m_h$. Thus, once a head reaches its intrinsic upper bound, it saturates permanently: if $r_h(t_0) = m_h$, then $r_h(t) = m_h$ for all $t \ge t_0$.

\begin{lemma}[Monotonic Non-Increasing Support and Norm of Increment Vector]
\label{lem:delta_monotone}
Under Assumption \ref{ass:monotone_rank}, for any $t \ge 1$,
\begin{equation}
\boldsymbol\delta(t+1) \le \boldsymbol\delta(t) \quad \text{(element-wise)},
\qquad \text{consequently} \qquad
\|\boldsymbol\delta(t+1)\|_2 \le \|\boldsymbol\delta(t)\|_2.
\label{eq:delta_monotone}
\end{equation}
\end{lemma}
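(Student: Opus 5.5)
The plan is to reduce the element-wise inequality $\boldsymbol\delta(t+1) \le \boldsymbol\delta(t)$ to a statement about each head separately, using the fact, noted just after Assumption~\ref{ass:monotone_rank}, that every increment is binary: Lemma~\ref{lem:rank_step_bound} gives $\delta_h(t) \le 1$, and Assumption~\ref{ass:monotone_rank} gives $\delta_h(t) \ge 0$, so $\delta_h(t) \in \{0,1\}$ for all $h$ and $t$. For a fixed head $h$ the claim $\delta_h(t+1) \le \delta_h(t)$ then splits into two cases according to the value of $\delta_h(t)$.

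If $\delta_h(t) = 1$, there is nothing to prove, since $\delta_h(t+1) \le 1$ by Lemma~\ref{lem:rank_step_bound}. If $\delta_h(t) = 0$, I need $\delta_h(t+1) = 0$, i.e.\ that a head whose rank has stalled stays stalled. The route I would take is:
\begin{enumerate}
\item first show that a stall can only occur at saturation, i.e.\ $r_h(t) = r_h(t-1)$ forces $r_h(t) = m_h$;
\item then invoke the permanent-saturation consequence of Lemma~\ref{lem:span_rank_bound} recorded above: $r_h(t_0) = m_h$ implies $r_h(s) = m_h$ for all $s \ge t_0$.
\end{enumerate}
Together these give $r_h(t+1) = r_h(t)$, that is, $\delta_h(t+1) = 0$.

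Step 1 is the main obstacle, and it does not follow from rank algebra alone. Adding $\mathbf{k}^{(h)}_t(\mathbf{v}^{(h)}_t)^\top$ fails to raise the rank whenever the new key or value happens to lie in the current column or row space. That can occur strictly before $\dim\mathcal{K}_h$ or $\dim\mathcal{V}_h$ reaches its supremum, and a later token can then raise the rank again, giving the pattern $0,1$ that violates the claim. So I would read the ``non-degenerate'' qualifier in Assumption~\ref{ass:monotone_rank} as exactly this genericity condition: each new rank-one term is either in general position relative to $\mathbf{S}_h(t-1)$, or the head has already exhausted its intrinsic capacity $m_h$. This matches the description of Accumulating Heads, which grow strictly until the bound and then stop. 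I would state this reading explicitly as part of the hypothesis. Under it, $\delta_h(t) = 0$ implies $r_h(t-1) = m_h$, and Step 1 follows.

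Finally, with $0 \le \boldsymbol\delta(t+1) \le \boldsymbol\delta(t)$ holding entrywise, squaring preserves each inequality because all entries are nonnegative. Summing over heads gives $\|\boldsymbol\delta(t+1)\|_2^2 \le \|\boldsymbol\delta(t)\|_2^2$, which is the norm statement. Since the entries are $0/1$, this also says that the support of $\boldsymbol\delta(t)$, the set of still-growing heads, is non-increasing in $t$, matching the lemma's title.
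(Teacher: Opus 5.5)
Your argument is sound under the hypothesis you add, and you are right that such a hypothesis is \emph{needed}. This is where you part ways with the paper. The paper's proof tries to get persistence of a stall from Assumption~\ref{ass:monotone_rank} and the bound $r_h(t)\le m_h$ alone. It notes that a bounded, non-decreasing integer sequence can increase only finitely often, then declares this ``equivalent'' to $\delta_h(t+1)\le\delta_h(t)$. That step is a non sequitur, because finitely many unit increments need not be consecutive. Its intermediate claim that $r_h$ eventually equals $m_h$, rather than some smaller constant, is not implied either.

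Your $0,1$ pattern is realizable. For $d\ge 2$, take $\mathbf{k}_1=\mathbf{v}_1=\mathbf{k}_2=\mathbf{v}_2=\mathbf{e}_1$ and $\mathbf{k}_3=\mathbf{v}_3=\mathbf{e}_2$. This gives $r_h=0,1,1,2$, which is monotone but has $\delta_h(3)=1>\delta_h(2)=0$. So the lemma as literally stated is false. So is Theorem~\ref{thm:recursive_cosine}, which rests on it: pair this head with one fed $\mathbf{k}_t=\mathbf{v}_t=\mathbf{e}_1$ throughout. Then $\mathbf{r}(1)=\mathbf{r}(2)=(1,1)$ and $\mathbf{r}(3)=(2,1)$. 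The theorem's hypothesis holds at $t=2$ for every $\varepsilon$, while $\cos(\mathbf{r}(2),\mathbf{r}(3))=3/\sqrt{10}<1-\varepsilon$ once $\varepsilon<1-3/\sqrt{10}$.

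The cost of your extra hypothesis is worth stating plainly. A rank-one term whose key lies outside $\mathrm{Col}(\mathbf{S}_h(t-1))$ and whose value lies outside $\mathrm{Row}(\mathbf{S}_h(t-1))$ raises the rank by exactly one. So your reading forces $r_h(t)=\min(t,m_h)$ for every head. The lemma then becomes essentially a restatement of this modelling assumption, not a consequence of monotonicity. The weakest extra hypothesis that works is ``a stalled head stays stalled,'' and given $\delta_h\in\{0,1\}$ that is the conclusion itself.

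This is unavoidable given the counterexample, and you should say so explicitly wherever the lemma and the downstream theorem are used. The remaining steps of your write-up are fine: the trivial case $\delta_h(t)=1$, permanence of saturation from monotonicity plus $r_h\le m_h$, and squaring nonnegative entries for the norm bound.
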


\begin{proof}
We prove this element-wise. Fix an arbitrary head $h$.
From Lemma \ref{lem:span_rank_bound}, $r_h(t) \le m_h$ for all $t$.
Given Assumption \ref{ass:monotone_rank}, the sequence $\{r_h(t)\}_{t \ge 0}$ is monotonically non-decreasing and bounded by $m_h$.
Therefore, there exists a saturation time step $t_h^\star$ such that for all $t \ge t_h^\star$, $r_h(t) = m_h$, which implies $\delta_h(t) = 0$.

Next, we show that once $\delta_h(t)$ becomes $0$, it remains $0$ forever.
If $\delta_h(t) = 0$ for some $t$, then $r_h(t) = r_h(t-1)$.
If $\delta_h(t+1) = 1$ were to occur, then $r_h(t+1) = r_h(t) + 1$.
Repeating this logic allows $r_h(\cdot)$ to grow indefinitely.
However, since $r_h(\cdot)$ is bounded by $m_h$, growth can only occur a finite number of times.
Once $m_h$ is reached, the increment must be $0$.
Therefore, there is a critical moment after which the component is identically $0$.
Equivalently, for all $t$, $\delta_h(t+1) \le \delta_h(t)$ (since $\delta_h(t) \in \{0, 1\}$, this inequality asserts that a transition from $0$ back to $1$ is impossible).
Since $h$ is arbitrary, the element-wise inequality $\boldsymbol\delta(t+1) \le \boldsymbol\delta(t)$ holds.

Finally, since the vector is non-increasing element-wise with non-negative components, the $\ell_2$ norm is also non-increasing, i.e., $\|\boldsymbol\delta(t+1)\|_2 \le \|\boldsymbol\delta(t)\|_2$.
\end{proof}

\paragraph{Recursive Stability of Cosine Similarity.}
We now prove that if the cosine similarity between adjacent rank vectors is close to $1$ at a certain step, it remains close to $1$ in the subsequent step.

\begin{lemma}[Cosine Lower Bound for Non-negative Vectors]
\label{lem:cos_lower_bound_simple}
Let $\mathbf{a}, \mathbf{u} \in \mathbb{R}^H$ be element-wise non-negative: $\mathbf{a} \ge \mathbf{0}, \mathbf{u} \ge \mathbf{0}$. Then:
\begin{equation}
\cos(\mathbf{a}, \mathbf{a} + \mathbf{u})
\;\ge\;
\frac{\|\mathbf{a}\|_2}{\|\mathbf{a}\|_2 + \|\mathbf{u}\|_2}.
\label{eq:cos_lower_simple}
\end{equation}
\end{lemma}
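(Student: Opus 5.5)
We prove the bound by writing the cosine explicitly and bounding numerator and denominator separately. Assume $\mathbf{a} \neq \mathbf{0}$, since otherwise the cosine is undefined; the right-hand side of \eqref{eq:cos_lower_simple} is then well defined. Because $\mathbf{a},\mathbf{u}\ge\mathbf{0}$, also $\mathbf{a}+\mathbf{u}\ne\mathbf{0}$. We have
\begin{equation}
\cos(\mathbf{a},\mathbf{a}+\mathbf{u}) = \frac{\|\mathbf{a}\|_2^2 + \mathbf{a}^\top\mathbf{u}}{\|\mathbf{a}\|_2\,\|\mathbf{a}+\mathbf{u}\|_2}.
\end{equation}

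For the numerator, element-wise non-negativity of both vectors gives $\mathbf{a}^\top\mathbf{u} = \sum_i a_i u_i \ge 0$. Hence the numerator is at least $\|\mathbf{a}\|_2^2$. This is the only place the sign hypothesis is used, and it is the key step. For general vectors, a cancelling cross term could make the cosine small or negative.

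For the denominator, the triangle inequality gives $\|\mathbf{a}+\mathbf{u}\|_2 \le \|\mathbf{a}\|_2+\|\mathbf{u}\|_2$. Combining the two bounds,
\begin{equation}
\cos(\mathbf{a},\mathbf{a}+\mathbf{u}) \ge \frac{\|\mathbf{a}\|_2^2}{\|\mathbf{a}\|_2(\|\mathbf{a}\|_2+\|\mathbf{u}\|_2)} = \frac{\|\mathbf{a}\|_2}{\|\mathbf{a}\|_2+\|\mathbf{u}\|_2}.
\end{equation}
The numerator bound and the denominator bound point in compatible directions (a larger numerator and a smaller denominator both increase the quotient), so no case split is needed.

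There is no real obstacle. The only care points are handling the degenerate case $\mathbf{a}=\mathbf{0}$ and remembering that dropping $\mathbf{a}^\top\mathbf{u}$ is legitimate only because it is non-negative. In the paper this lemma will be applied with $\mathbf{a}=\mathbf{r}(t)$ and $\mathbf{u}=\boldsymbol\delta(t+1)$, which are non-negative by Assumption \ref{ass:monotone_rank}.
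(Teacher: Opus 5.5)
Your proof is correct and follows the paper's argument exactly: you drop the non-negative cross term $\mathbf{a}^\top\mathbf{u}$ from the numerator and bound the denominator with the triangle inequality. You also treat the degenerate case $\mathbf{a}=\mathbf{0}$ explicitly, which the paper leaves implicit; that is a small but welcome addition.
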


\begin{proof}
By definition, $\cos(\mathbf{a}, \mathbf{a} + \mathbf{u}) = \frac{\mathbf{a}^\top(\mathbf{a} + \mathbf{u})}{\|\mathbf{a}\|_2 \|\mathbf{a} + \mathbf{u}\|_2}$.
Expanding the numerator: $\mathbf{a}^\top(\mathbf{a} + \mathbf{u}) = \|\mathbf{a}\|_2^2 + \mathbf{a}^\top\mathbf{u}$.
Since $\mathbf{a}, \mathbf{u} \ge \mathbf{0}$, we have $\mathbf{a}^\top\mathbf{u} \ge 0$, and thus $\mathbf{a}^\top(\mathbf{a} + \mathbf{u}) \ge \|\mathbf{a}\|_2^2$.
Using the triangle inequality $\|\mathbf{a} + \mathbf{u}\|_2 \le \|\mathbf{a}\|_2 + \|\mathbf{u}\|_2$ for the denominator, we obtain \eqref{eq:cos_lower_simple}.
\end{proof}

\begin{theorem}[Recursive Stability of Rank Vector Cosine Similarity]
\label{thm:recursive_cosine}
Under Assumption \ref{ass:monotone_rank}, let $0 < \varepsilon < 1$.
If for some $t \ge 1$:
\begin{equation}
\cos\big(\mathbf{r}(t-1), \mathbf{r}(t)\big) \ge 1 - \varepsilon,
\label{eq:assume_prev_cos}
\end{equation}
then:
\begin{equation}
\cos\big(\mathbf{r}(t), \mathbf{r}(t+1)\big) \ge 1 - \varepsilon.
\label{eq:conclude_next_cos}
\end{equation}
\end{theorem}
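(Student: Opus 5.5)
The plan is to compare the two consecutive angles directly rather than pass through a crude norm bound. Set $\mathbf{a}:=\mathbf{r}(t-1)$, $\mathbf{u}:=\boldsymbol\delta(t)$ and $\mathbf{u}':=\boldsymbol\delta(t+1)$. By the recursion \eqref{eq:r_recursion}, $\mathbf{r}(t)=\mathbf{a}+\mathbf{u}$ and $\mathbf{r}(t+1)=\mathbf{a}+\mathbf{u}+\mathbf{u}'$.

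Assumption \ref{ass:monotone_rank} together with Lemma \ref{lem:rank_step_bound} gives $\mathbf{u},\mathbf{u}'\in\{0,1\}^H$. Lemma \ref{lem:delta_monotone} gives $\mathbf{u}'\le\mathbf{u}$ element-wise, so $\mathrm{supp}(\mathbf{u}')\subseteq\mathrm{supp}(\mathbf{u})$. All vectors involved are non-negative. If $\mathbf{u}'=\mathbf{0}$, the conclusion \eqref{eq:conclude_next_cos} holds with cosine $1$, so I assume $\mathbf{u}'\neq\mathbf{0}$. The target is the monotonicity statement
\begin{equation*}
\cos(\mathbf{a}+\mathbf{u},\,\mathbf{a}+\mathbf{u}+\mathbf{u}')\;\ge\;\cos(\mathbf{a},\,\mathbf{a}+\mathbf{u}),
\end{equation*}
from which \eqref{eq:assume_prev_cos} immediately yields \eqref{eq:conclude_next_cos}.

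To compare the angles, I will write each one through the decomposition of the increment into components parallel and orthogonal to the current vector. For a current vector $\mathbf{x}$ and increment $\mathbf{w}$,
\begin{equation*}
\tan\theta(\mathbf{x},\mathbf{x}+\mathbf{w})=\frac{\|\mathbf{w}-\frac{\mathbf{x}^\top\mathbf{w}}{\|\mathbf{x}\|_2^2}\mathbf{x}\|_2}{\|\mathbf{x}\|_2+\frac{\mathbf{x}^\top\mathbf{w}}{\|\mathbf{x}\|_2}}.
\end{equation*}
For the second step, $\mathbf{x}=\mathbf{a}+\mathbf{u}$ and $\mathbf{w}=\mathbf{u}'$. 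The two denominator terms should behave favourably:
\begin{itemize}
\item $\|\mathbf{a}+\mathbf{u}\|_2\ge\|\mathbf{a}\|_2$ by non-negativity.
\item On $\mathrm{supp}(\mathbf{u}')$, the entries of $\mathbf{a}+\mathbf{u}$ are exactly one larger than those of $\mathbf{a}$. Hence $(\mathbf{a}+\mathbf{u})^\top\mathbf{u}'=\mathbf{a}^\top\mathbf{u}'+\|\mathbf{u}'\|_2^2$, so the new increment carries a strictly larger parallel component relative to its size.
\end{itemize}
For the numerator, I will use $\|\mathbf{u}'\|_2\le\|\mathbf{u}\|_2$ (Lemma \ref{lem:delta_monotone}) and Lemma \ref{lem:cos_lower_bound_simple} as a first-order control. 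That lemma gives the sandwich $\cos(\mathbf{r}(t),\mathbf{r}(t+1))\ge\|\mathbf{r}(t)\|_2/(\|\mathbf{r}(t)\|_2+\|\mathbf{u}'\|_2)$. The relative step size $\|\boldsymbol\delta\|_2/\|\mathbf{r}\|_2$ is non-increasing, because the numerator shrinks and the denominator grows.

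The main obstacle is linking the \emph{hypothesis}, which is an angle, to this \emph{relative step size}. The hypothesis $\cos(\mathbf{a},\mathbf{a}+\mathbf{u})\ge 1-\varepsilon$ does not by itself bound $\|\mathbf{u}\|_2/\|\mathbf{a}\|_2$ when $\mathbf{u}$ is nearly parallel to $\mathbf{a}$. So the tangent comparison above must be done honestly, not just via norms.

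My first attempt will be to prove $\tan\theta_{t+1}\le\tan\theta_t$ by reducing to a coordinatewise inequality over $S'=\mathrm{supp}(\mathbf{u}')\subseteq S=\mathrm{supp}(\mathbf{u})$. This makes the orthogonal residuals explicit sums over $S'$ and $S$, with means weighted by the entries of $\mathbf{a}$ and of $\mathbf{a}+\mathbf{u}$. I will then check that adding the $0/1$ vector $\mathbf{u}$ and shrinking the support cannot increase the residual-to-length ratio.

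If a clean comparison fails for adversarial supports, I will check small cases ($H=2,3$) to locate the extremal configuration. I would then add to the argument whatever additional quantitative hypothesis the extremal case demands, namely that the first step's orthogonal residual dominates.
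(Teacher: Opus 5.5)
\textbf{The central comparison fails, and the theorem as stated is false.} The inequality you set as your target, $\cos(\mathbf{a}+\mathbf{u},\mathbf{a}+\mathbf{u}+\mathbf{u}')\ge\cos(\mathbf{a},\mathbf{a}+\mathbf{u})$, does not hold under the constraints you list ($\mathbf{a}\ge\mathbf{0}$, $\mathbf{u},\mathbf{u}'\in\{0,1\}^H$, $\mathbf{u}'\le\mathbf{u}$). So the tangent comparison cannot be carried out at all. Take $H=2$ and $\mathbf{r}(1)=(1,1)$, $\mathbf{r}(2)=(2,2)$, $\mathbf{r}(3)=(3,2)$. This is realized, for instance, by one head with keys and values $\mathbf{e}_1,\mathbf{e}_2,\mathbf{e}_3$ and another with $\mathbf{e}_1,\mathbf{e}_2,\mathbf{e}_1$.

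Then $\cos(\mathbf{r}(1),\mathbf{r}(2))=1$, so the hypothesis holds for every $\varepsilon$. However, $\cos(\mathbf{r}(2),\mathbf{r}(3))=10/\sqrt{104}\approx 0.981$. In your notation, $\tan\theta_t=0<1/5=\tan\theta_{t+1}$. The increments $(1,1),(1,1),(1,0)$ satisfy Assumption~\ref{ass:monotone_rank} and even the conclusion of Lemma~\ref{lem:delta_monotone}. Hence the statement fails for every $\varepsilon<0.019$, and no refinement of the residual bookkeeping can rescue it.

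The denominator facts you list are correct but cannot help. The first orthogonal residual can vanish. Shrinking the support of the increment (heads saturating at different times, which is exactly the mechanism the paper appeals to) is precisely what creates a fresh orthogonal component. Your fallback of adding a dominance hypothesis would therefore prove a different statement, not this one.

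\textbf{Where the paper's proof breaks.} Your diagnosis of the obstacle is exactly right, and it locates the error in the paper's own proof, which takes the norm route you rejected. In Step 1 the paper combines $\cos(\mathbf{r}(t-1),\mathbf{r}(t))\ge A/(A+D)$ from Lemma~\ref{lem:cos_lower_bound_simple} with the hypothesis $\cos(\mathbf{r}(t-1),\mathbf{r}(t))\ge 1-\varepsilon$. It then concludes $A/(A+D)\ge 1-\varepsilon$. That is a non sequitur: two lower bounds on the same quantity say nothing about each other. In the example above, $A/(A+D)=1/2$ while the cosine is $1$.

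\textbf{A repair.} Steps 2--4 become valid if the cosine hypothesis is replaced by the relative-step condition $\|\boldsymbol\delta(t)\|_2\le\frac{\varepsilon}{1-\varepsilon}\|\mathbf{r}(t-1)\|_2$. This condition propagates to step $t+1$ via $\|\boldsymbol\delta(t+1)\|_2\le\|\boldsymbol\delta(t)\|_2$ and $\|\mathbf{r}(t)\|_2\ge\|\mathbf{r}(t-1)\|_2$, and it implies $\cos\ge 1-\varepsilon$ at every later step.

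However, Lemma~\ref{lem:delta_monotone}, which both you and the paper rely on, does not follow from Assumption~\ref{ass:monotone_rank}. A head with keys and values $\mathbf{e}_1,\mathbf{e}_1,\mathbf{e}_2$ has rank increments $1,0,1$. In a correct version, $\boldsymbol\delta(t+1)\le\boldsymbol\delta(t)$ must be assumed outright. Without it even the relative-step version fails: $\mathbf{r}(t-1)=\mathbf{r}(t)=(1,1)$ and $\mathbf{r}(t+1)=(2,1)$ give cosine $3/\sqrt{10}\approx 0.949$.
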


\begin{proof}
By \eqref{eq:r_recursion}, $\mathbf{r}(t) = \mathbf{r}(t-1) + \boldsymbol\delta(t)$ and $\mathbf{r}(t+1) = \mathbf{r}(t) + \boldsymbol\delta(t+1)$.
Under Assumption \ref{ass:monotone_rank}, $\mathbf{r}(\cdot) \ge \mathbf{0}$ and $\boldsymbol\delta(\cdot) \ge \mathbf{0}$ hold element-wise.

\textbf{Step 1: Derive upper bound for increment norm from \eqref{eq:assume_prev_cos}.}
From Lemma \ref{lem:cos_lower_bound_simple} (setting $\mathbf{a} = \mathbf{r}(t-1)$, $\mathbf{u} = \boldsymbol\delta(t)$):
\begin{equation}
\cos\big(\mathbf{r}(t-1), \mathbf{r}(t)\big)
\ge
\frac{\|\mathbf{r}(t-1)\|_2}{\|\mathbf{r}(t-1)\|_2 + \|\boldsymbol\delta(t)\|_2}.
\end{equation}
Combining with \eqref{eq:assume_prev_cos} and letting $A \coloneqq \|\mathbf{r}(t-1)\|_2$, $D \coloneqq \|\boldsymbol\delta(t)\|_2$, we have:
\begin{equation}
\frac{A}{A+D} \ge 1 - \varepsilon
\;\Rightarrow\;
A \ge (1 - \varepsilon)(A + D)
\;\Rightarrow\;
\|\boldsymbol\delta(t)\|_2 \le \frac{\varepsilon}{1 - \varepsilon} \|\mathbf{r}(t-1)\|_2.
\label{eq:delta_bound_from_cos}
\end{equation}

\textbf{Step 2: Lower bound the next cosine similarity.}
Applying Lemma \ref{lem:cos_lower_bound_simple} again (setting $\mathbf{a} = \mathbf{r}(t)$, $\mathbf{u} = \boldsymbol\delta(t+1)$):
\begin{equation}
\cos\big(\mathbf{r}(t), \mathbf{r}(t+1)\big)
\ge
\frac{\|\mathbf{r}(t)\|_2}{\|\mathbf{r}(t)\|_2 + \|\boldsymbol\delta(t+1)\|_2}.
\label{eq:next_cos_lower}
\end{equation}

\textbf{Step 3: Utilize monotonicity of increment vector norm.}
By Lemma \ref{lem:delta_monotone}, $\|\boldsymbol\delta(t+1)\|_2 \le \|\boldsymbol\delta(t)\|_2$.
Also, since $\mathbf{r}(t) = \mathbf{r}(t-1) + \boldsymbol\delta(t)$ with non-negative components, $\|\mathbf{r}(t)\|_2 \ge \|\mathbf{r}(t-1)\|_2$.
Thus, from \eqref{eq:delta_bound_from_cos}:
\begin{equation}
\|\boldsymbol\delta(t+1)\|_2
\le \|\boldsymbol\delta(t)\|_2
\le \frac{\varepsilon}{1 - \varepsilon} \|\mathbf{r}(t-1)\|_2
\le \frac{\varepsilon}{1 - \varepsilon} \|\mathbf{r}(t)\|_2.
\label{eq:delta_t1_bound}
\end{equation}

\textbf{Step 4: Conclusion.}
Substituting \eqref{eq:delta_t1_bound} into \eqref{eq:next_cos_lower}:
\begin{equation}
\cos\big(\mathbf{r}(t), \mathbf{r}(t+1)\big)
\ge
\frac{\|\mathbf{r}(t)\|_2}{\|\mathbf{r}(t)\|_2 + \frac{\varepsilon}{1 - \varepsilon}\|\mathbf{r}(t)\|_2}
=
\frac{1}{1 + \frac{\varepsilon}{1 - \varepsilon}}
=
1 - \varepsilon.
\end{equation}
This yields \eqref{eq:conclude_next_cos}.
\end{proof}

\begin{remark}[Relationship with $\mathrm{rank}(\mathbf{S}_h(t)) \le \min\{d, t\}$]
We know the global bound $\mathrm{rank}(\mathbf{S}_h(t)) \le \min\{d, t\}$. Lemma \ref{lem:span_rank_bound} provides a finer-grained, head-specific structural bound $m_h \le d$, explaining the stratification phenomenon where some heads saturate while others remain low-rank. Moreover, as $t$ increases, more heads reach their $m_h$ and stop growing, causing the norm of the increment vector $\boldsymbol\delta(t)$ to decrease progressively (Lemma \ref{lem:delta_monotone}). This reinforces the collinearity of adjacent rank vectors. Theorem \ref{thm:recursive_cosine} formally states that once the cosine similarity is close to $1$, it recursively remains close to $1$.
\end{remark}

\section{Norm Preservation}
\label{thm_norm_ore}
\subsection{Problem Setup and Empirical Observations}

Consider a layer of a Linear Attention Transformer containing $H$ attention heads.
For the $h$-th head ($h \in \{1, \dots, H\}$) at inference time $t$, the KV cache matrix is defined as:
\begin{equation}
\mathbf{S}_h(t) \;=\; \sum_{i=1}^{t} \mathbf{k}_i^{(h)} \bigl(\mathbf{v}_i^{(h)}\bigr)^\top
\qquad\Longleftrightarrow\qquad
\mathbf{S}_h(t) \;=\; \mathbf{S}_h(t-1) + \mathbf{k}_t^{(h)} \bigl(\mathbf{v}_t^{(h)}\bigr)^\top ,
\label{eq:kv_cache_update}
\end{equation}
where $\mathbf{k}_t^{(h)} \in \mathbb{R}^{d_k}$ and $\mathbf{v}_t^{(h)} \in \mathbb{R}^{d_v}$ are key and value vectors.

Let $\|\cdot\|_*$ denote the nuclear norm (sum of singular values) and $\|\cdot\|_2$ denote the Euclidean norm for vectors (or spectral norm for matrices).
We define the nuclear norm of each head and the concatenated vector for all heads as:
\begin{equation}
n_h(t) \;:=\; \|\mathbf{S}_h(t)\|_*,
\qquad
\mathbf{n}(t) \;:=\; \bigl(n_1(t), \dots, n_H(t)\bigr) \in \mathbb{R}^H.
\label{eq:nuclear_vector_def}
\end{equation}

Empirically, we observe that the cosine similarity between $\mathbf{n}(t)$ at different time steps is close to $1$, i.e.,
\begin{equation}
\cos\bigl(\mathbf{n}(t), \mathbf{n}(t')\bigr)
\;=\;
\frac{\langle \mathbf{n}(t), \mathbf{n}(t')\rangle}{\|\mathbf{n}(t)\|_2 \, \|\mathbf{n}(t')\|_2}
\;\approx\; 1.
\label{eq:empirical_observation}
\end{equation}
Furthermore, the cosine similarity between adjacent steps $t-1$ and $t$ is also very close to $1$.

\subsection{Fundamental Lemmas}

\begin{lemma}[Nuclear Norm of Rank-One Outer Product]
For any vectors $\mathbf{a} \in \mathbb{R}^{d_k}$ and $\mathbf{b} \in \mathbb{R}^{d_v}$,
\begin{equation}
\|\mathbf{a}\mathbf{b}^\top\|_* \;=\; \|\mathbf{a}\|_2 \, \|\mathbf{b}\|_2.
\label{eq:rank1_nuclear}
\end{equation}
\end{lemma}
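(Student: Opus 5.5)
The plan is to compute the singular value decomposition of $\mathbf{a}\mathbf{b}^\top$ directly. We first dispose of the degenerate case: if $\mathbf{a}=\mathbf{0}$ or $\mathbf{b}=\mathbf{0}$, then $\mathbf{a}\mathbf{b}^\top$ is the zero matrix. Its nuclear norm is $0$, which equals $\|\mathbf{a}\|_2\|\mathbf{b}\|_2$, so the identity holds.

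Otherwise, set unit vectors $\mathbf{u}=\mathbf{a}/\|\mathbf{a}\|_2$ and $\mathbf{w}=\mathbf{b}/\|\mathbf{b}\|_2$. Then
\begin{equation*}
\mathbf{a}\mathbf{b}^\top=\bigl(\|\mathbf{a}\|_2\|\mathbf{b}\|_2\bigr)\,\mathbf{u}\mathbf{w}^\top .
\end{equation*}
Next we extend $\mathbf{u}$ to an orthonormal basis $U$ of $\mathbb{R}^{d_k}$ and $\mathbf{w}$ to an orthonormal basis $W$ of $\mathbb{R}^{d_v}$. In these bases the matrix is $U\Sigma W^\top$. Here $\Sigma$ has a single nonzero entry $\sigma_1=\|\mathbf{a}\|_2\|\mathbf{b}\|_2$ in position $(1,1)$ and zeros elsewhere.

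Since $U$ and $W$ are orthogonal and $\Sigma$ is nonnegative diagonal, this is a valid SVD. By uniqueness of singular values, the singular values of $\mathbf{a}\mathbf{b}^\top$ are $\sigma_1$ together with zeros. Summing them gives $\|\mathbf{a}\mathbf{b}^\top\|_*=\|\mathbf{a}\|_2\|\mathbf{b}\|_2$.

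As a cross-check that avoids constructing the bases, one can instead compute
\begin{equation*}
(\mathbf{a}\mathbf{b}^\top)^\top(\mathbf{a}\mathbf{b}^\top)=\|\mathbf{a}\|_2^2\,\mathbf{b}\mathbf{b}^\top .
\end{equation*}
This matrix has rank at most one, and its only nonzero eigenvalue is $\|\mathbf{a}\|_2^2\|\mathbf{b}\|_2^2$, with eigenvector $\mathbf{b}$. Taking the square root gives the single nonzero singular value, and the result follows.

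There is no real obstacle. The only point requiring care is the degenerate zero case, and noting that the rank of $\mathbf{a}\mathbf{b}^\top$ is at most one, so there is at most one nonzero singular value to sum.
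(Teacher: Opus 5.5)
Your proof is correct, but you identify the singular value differently from the paper. Both arguments start from the fact that $\mathbf{a}\mathbf{b}^\top$ has at most one nonzero singular value. The paper identifies that value with the spectral norm and evaluates $\max_{\|\mathbf{x}\|_2=1}\|\mathbf{a}(\mathbf{b}^\top\mathbf{x})\|_2=\|\mathbf{a}\|_2\max_{\|\mathbf{x}\|_2=1}|\mathbf{b}^\top\mathbf{x}|=\|\mathbf{a}\|_2\|\mathbf{b}\|_2$. This is Cauchy--Schwarz, with equality at $\mathbf{x}=\mathbf{b}/\|\mathbf{b}\|_2$. It then uses that nuclear and spectral norms coincide for a rank-one matrix. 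You instead exhibit an explicit SVD, $\|\mathbf{a}\|_2\|\mathbf{b}\|_2\,\mathbf{u}\mathbf{w}^\top$ completed with orthonormal bases, and appeal to uniqueness of singular values. Your cross-check goes straight to the definition by computing the eigenvalues of the Gram matrix $\|\mathbf{a}\|_2^2\,\mathbf{b}\mathbf{b}^\top$. The paper's variational route avoids basis completion and the uniqueness statement, but relies on the operator-norm characterization of $\sigma_1$. Your route is fully definitional and also yields the singular vectors. Your separate handling of $\mathbf{a}=\mathbf{0}$ or $\mathbf{b}=\mathbf{0}$ improves on the paper, which says the matrix ``has rank 1'' unconditionally. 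That is false in the degenerate case, though the identity still holds there trivially.
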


\begin{proof}
The matrix $\mathbf{a}\mathbf{b}^\top$ has rank 1, so it has only one non-zero singular value $\sigma$.
By the definition of spectral norm:
\[
\|\mathbf{a}\mathbf{b}^\top\|_2
=
\max_{\|\mathbf{x}\|_2=1}\|(\mathbf{a}\mathbf{b}^\top)\mathbf{x}\|_2
=
\max_{\|\mathbf{x}\|_2=1}\|\mathbf{a}(\mathbf{b}^\top \mathbf{x})\|_2
=
\|\mathbf{a}\|_2 \max_{\|\mathbf{x}\|_2=1}|\mathbf{b}^\top \mathbf{x}|
=
\|\mathbf{a}\|_2 \|\mathbf{b}\|_2.
\]
Since the nuclear norm of a rank-1 matrix equals its unique singular value (which equals its spectral norm),
$\|\mathbf{a}\mathbf{b}^\top\|_* = \sigma = \|\mathbf{a}\mathbf{b}^\top\|_2 = \|\mathbf{a}\|_2 \|\mathbf{b}\|_2$.
\end{proof}

\begin{lemma}[Lipschitz Continuity of Nuclear Norm]
For any matrices $\mathbf{A}, \mathbf{B}$ of the same dimensions,
\begin{equation}
\bigl|\|\mathbf{A}+\mathbf{B}\|_* - \|\mathbf{A}\|_*\bigr| \;\le\; \|\mathbf{B}\|_*.
\label{eq:nuclear_lipschitz}
\end{equation}
\end{lemma}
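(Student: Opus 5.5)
The plan is to reduce the claim to the reverse triangle inequality for a norm. The nuclear norm $\|\cdot\|_*$ is a genuine norm on the space of $d_k \times d_v$ matrices, being the sum of singular values, equivalently the dual of the spectral norm. Once the triangle inequality for $\|\cdot\|_*$ is available, the Lipschitz bound \eqref{eq:nuclear_lipschitz} follows from two one-line estimates.

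\textbf{Step 1: the triangle inequality.} I would prove this through the variational characterization
\[
\|\mathbf{M}\|_* = \max_{\|\mathbf{W}\|_2 \le 1} \operatorname{tr}(\mathbf{W}^\top \mathbf{M}).
\]
\emph{Upper bound.} Write the SVD $\mathbf{M} = \mathbf{U}\boldsymbol{\Sigma}\mathbf{V}^\top$. Then
\[
\operatorname{tr}(\mathbf{W}^\top \mathbf{M}) = \sum_i \sigma_i\, \mathbf{u}_i^\top \mathbf{W} \mathbf{v}_i \le \sum_i \sigma_i,
\]
because $|\mathbf{u}_i^\top \mathbf{W}\mathbf{v}_i| \le \|\mathbf{W}\|_2 \le 1$.

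\emph{Attainment.} Take $\mathbf{W} = \mathbf{U}\mathbf{V}^\top$, using the thin SVD factors.

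\emph{Conclusion.} The maximizer $\mathbf{W}$ for $\mathbf{A}+\mathbf{B}$ gives
\[
\|\mathbf{A}+\mathbf{B}\|_* = \operatorname{tr}(\mathbf{W}^\top\mathbf{A}) + \operatorname{tr}(\mathbf{W}^\top\mathbf{B}) \le \|\mathbf{A}\|_* + \|\mathbf{B}\|_*.
\]
Each trace is bounded by the maximum over all feasible $\mathbf{W}$.

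\textbf{Step 2: two-sided bound.}
\emph{Upper side.} The triangle inequality gives $\|\mathbf{A}+\mathbf{B}\|_* - \|\mathbf{A}\|_* \le \|\mathbf{B}\|_*$ directly.
\emph{Lower side.} Write $\mathbf{A} = (\mathbf{A}+\mathbf{B}) + (-\mathbf{B})$. Absolute homogeneity gives $\|-\mathbf{B}\|_* = \|\mathbf{B}\|_*$, since negation leaves singular values unchanged. Hence $\|\mathbf{A}\|_* \le \|\mathbf{A}+\mathbf{B}\|_* + \|\mathbf{B}\|_*$.
\emph{Combination.} Together the two sides yield $\bigl|\|\mathbf{A}+\mathbf{B}\|_* - \|\mathbf{A}\|_*\bigr| \le \|\mathbf{B}\|_*$.

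\textbf{Main obstacle.} The only substantive point is the triangle inequality in Step 1, specifically the upper bound in the variational formula. The fact needed there is $|\mathbf{u}^\top\mathbf{W}\mathbf{v}| \le \|\mathbf{W}\|_2$ for unit vectors $\mathbf{u}$ and $\mathbf{v}$, which follows from Cauchy--Schwarz and the definition of the spectral norm. Everything else is routine bookkeeping.

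In the intended application $\mathbf{B} = \mathbf{k}_t^{(h)}(\mathbf{v}_t^{(h)})^\top$ is rank one. Combining the lemma with \eqref{eq:rank1_nuclear} then gives $|\Delta_h(t)| \le \|\mathbf{k}_t^{(h)}\|_2\|\mathbf{v}_t^{(h)}\|_2$, which is what Assumption \ref{ass:bounded_kv_main} relies on.
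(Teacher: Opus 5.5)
Your proof is correct and follows the paper's route: apply the nuclear-norm triangle inequality once to $\mathbf{A}+\mathbf{B}$ and once to $\mathbf{A} = (\mathbf{A}+\mathbf{B}) + (-\mathbf{B})$, then combine the two one-sided bounds. The only addition is that you also prove the triangle inequality via the duality characterization $\|\mathbf{M}\|_* = \max_{\|\mathbf{W}\|_2 \le 1} \operatorname{tr}(\mathbf{W}^\top \mathbf{M})$, whereas the paper takes it as a known property of the nuclear norm.
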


\begin{proof}
By the triangle inequality for the nuclear norm,
\[
\|\mathbf{A}+\mathbf{B}\|_* \le \|\mathbf{A}\|_* + \|\mathbf{B}\|_*
\quad\Rightarrow\quad
\|\mathbf{A}+\mathbf{B}\|_* - \|\mathbf{A}\|_* \le \|\mathbf{B}\|_*.
\]
Conversely,
\[
\|\mathbf{A}\|_* = \|(\mathbf{A}+\mathbf{B}) - \mathbf{B}\|_* \le \|\mathbf{A}+\mathbf{B}\|_* + \|\mathbf{B}\|_*
\quad\Rightarrow\quad
\|\mathbf{A}\|_* - \|\mathbf{A}+\mathbf{B}\|_* \le \|\mathbf{B}\|_*.
\]
Combining these yields \eqref{eq:nuclear_lipschitz}.
\end{proof}

\subsection{Bounded Single-Step Increment in Head Space}

Define the single-step increment for each head:
\begin{equation}
\Delta_h(t) \;:=\; n_h(t) - n_h(t-1)
=
\|\mathbf{S}_h(t)\|_* - \|\mathbf{S}_h(t-1)\|_*,
\qquad
\boldsymbol{\Delta}(t) \coloneqq (\Delta_1(t), \dots, \Delta_H(t)).
\label{eq:delta_def_n}
\end{equation}
By \eqref{eq:kv_cache_update} and Lemma \ref{eq:nuclear_lipschitz},
\begin{equation}
|\Delta_h(t)|
=
\bigl|\|\mathbf{S}_h(t-1) + \mathbf{k}_t^{(h)}(\mathbf{v}_t^{(h)})^\top\|_* - \|\mathbf{S}_h(t-1)\|_*\bigr|
\le
\|\mathbf{k}_t^{(h)}(\mathbf{v}_t^{(h)})^\top\|_*.
\label{eq:delta_head_bound_step1}
\end{equation}
Using Lemma \ref{eq:rank1_nuclear}, we get:
\begin{equation}
|\Delta_h(t)| \;\le\; \|\mathbf{k}_t^{(h)}\|_2 \, \|\mathbf{v}_t^{(h)}\|_2.
\label{eq:delta_head_bound_step2}
\end{equation}
Thus, $\mathbf{n}(t)$ satisfies the recursion in head space:
\begin{equation}
\mathbf{n}(t) \;=\; \mathbf{n}(t-1) + \boldsymbol{\Delta}(t).
\label{eq:n_recursion}
\end{equation}

\begin{assumption}[Bounded Key/Value Magnitudes]
There exists a constant $M_h > 0$ such that for any time $t$ and head $h$,
\begin{equation}
\|\mathbf{k}_t^{(h)}\|_2 \, \|\mathbf{v}_t^{(h)}\|_2 \;\le\; M_h.
\label{eq:assumption_A}
\end{equation}
Let $B \coloneqq \sqrt{\sum_{h=1}^H M_h^2}$.
\end{assumption}

Under Assumption \ref{eq:assumption_A}, from \eqref{eq:delta_head_bound_step2}, we have:
\begin{equation}
\|\boldsymbol{\Delta}(t)\|_2 \;=\; \sqrt{\sum_{h=1}^H \Delta_h(t)^2}
\;\le\; \sqrt{\sum_{h=1}^H M_h^2}
\;=\; B.
\label{eq:delta_vector_bound}
\end{equation}
That is, the change in $\mathbf{n}(t)$ at each step is bounded uniformly in the head space.

\subsection{Geometric Lemma: Small Relative Step Size $\Rightarrow$ High Cosine Similarity}

\begin{lemma}[Cosine Similarity Lower Bound under Perturbation]
For any non-zero vector $\mathbf{u} \neq \mathbf{0}$ and perturbation vector $\boldsymbol{\delta}$, define
\begin{equation}
r \;:=\; \frac{\|\boldsymbol{\delta}\|_2}{\|\mathbf{u}\|_2}.
\label{eq:r_def}
\end{equation}
If $r < 1$, then
\begin{equation}
\cos(\mathbf{u}, \mathbf{u} + \boldsymbol{\delta}) \;\ge\; \frac{1-r}{1+r}.
\label{eq:cos_lower_bound}
\end{equation}
\end{lemma}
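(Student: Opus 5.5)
We bound the numerator and the denominator of the cosine separately, using only Cauchy--Schwarz and the triangle inequality. Write
\[
\cos(\mathbf{u}, \mathbf{u}+\boldsymbol{\delta}) = \frac{\|\mathbf{u}\|_2^2 + \mathbf{u}^\top\boldsymbol{\delta}}{\|\mathbf{u}\|_2\,\|\mathbf{u}+\boldsymbol{\delta}\|_2}.
\]
For the numerator, Cauchy--Schwarz gives $\mathbf{u}^\top\boldsymbol{\delta} \ge -\|\mathbf{u}\|_2\|\boldsymbol{\delta}\|_2$. Hence the numerator is at least $\|\mathbf{u}\|_2^2(1-r)$, which is strictly positive because $r<1$. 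This positivity is what allows us to divide by an upper bound on the denominator without flipping the inequality.

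For the denominator, the triangle inequality gives $\|\mathbf{u}+\boldsymbol{\delta}\|_2 \le \|\mathbf{u}\|_2(1+r)$. Note also that $\mathbf{u}+\boldsymbol{\delta}\neq\mathbf{0}$, since $\|\mathbf{u}+\boldsymbol{\delta}\|_2 \ge \|\mathbf{u}\|_2(1-r) > 0$, so the cosine is well defined.

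Combining the two bounds gives
\[
\cos(\mathbf{u},\mathbf{u}+\boldsymbol{\delta}) \ge \frac{\|\mathbf{u}\|_2^2(1-r)}{\|\mathbf{u}\|_2^2(1+r)} = \frac{1-r}{1+r},
\]
which is the claim.

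The only delicate point is the sign bookkeeping. In contrast with the non-negative setting of Lemma~\ref{lem:cos_lower_bound_simple}, the inner product $\mathbf{u}^\top\boldsymbol{\delta}$ may now be negative, so $r<1$ is needed to keep the numerator positive before dividing. A sharper bound, $\sqrt{1-r^2}$, follows from the angle geometry, but it is not required. We would then apply the lemma with $\mathbf{u}=\mathbf{n}(t-1)$ and $\boldsymbol{\delta}=\boldsymbol{\Delta}(t)$, together with $\|\boldsymbol{\Delta}(t)\|_2\le B$ from \eqref{eq:delta_vector_bound}. Since $(1-r)/(1+r)$ is decreasing in $r$, this yields Theorem~\ref{thm:norm_stability_main}.
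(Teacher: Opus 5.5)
Your proof is correct and follows essentially the same route as the paper: Cauchy--Schwarz to bound the numerator below by $(1-r)\|\mathbf{u}\|_2^2$, the triangle inequality to bound the denominator above by $(1+r)\|\mathbf{u}\|_2^2$, then divide. You also spell out two points the paper leaves implicit: $r<1$ keeps the numerator positive, which is what justifies dividing by an upper bound, and it guarantees $\mathbf{u}+\boldsymbol{\delta}\neq\mathbf{0}$, so the cosine is well defined.
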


\begin{proof}
Let $\mathbf{v} = \mathbf{u} + \boldsymbol{\delta}$.
By the triangle inequality,
\[
\|\mathbf{v}\|_2 = \|\mathbf{u}+\boldsymbol{\delta}\|_2 \le \|\mathbf{u}\|_2 + \|\boldsymbol{\delta}\|_2 = (1+r)\|\mathbf{u}\|_2.
\]
By the reverse triangle inequality,
\[
\|\mathbf{v}\|_2 = \|\mathbf{u}+\boldsymbol{\delta}\|_2 \ge \bigl|\|\mathbf{u}\|_2 - \|\boldsymbol{\delta}\|_2\bigr| = (1-r)\|\mathbf{u}\|_2.
\]
For the inner product:
\[
\langle \mathbf{u}, \mathbf{v}\rangle = \langle \mathbf{u}, \mathbf{u}+\boldsymbol{\delta}\rangle = \|\mathbf{u}\|_2^2 + \langle \mathbf{u}, \boldsymbol{\delta}\rangle
\ge \|\mathbf{u}\|_2^2 - \|\mathbf{u}\|_2\|\boldsymbol{\delta}\|_2 = (1-r)\|\mathbf{u}\|_2^2.
\]
Thus,
\[
\cos(\mathbf{u}, \mathbf{v}) = \frac{\langle \mathbf{u}, \mathbf{v}\rangle}{\|\mathbf{u}\|_2\|\mathbf{v}\|_2}
\ge
\frac{(1-r)\|\mathbf{u}\|_2^2}{\|\mathbf{u}\|_2 \cdot (1+r)\|\mathbf{u}\|_2}
=
\frac{1-r}{1+r}.
\]
\end{proof}

\subsection{Accumulative Effect leads to Directional Stability}

From \eqref{eq:n_recursion}, define the relative step size:
\begin{equation}
r_t \;:=\; \frac{\|\boldsymbol{\Delta}(t)\|_2}{\|\mathbf{n}(t-1)\|_2}.
\label{eq:rt_def}
\end{equation}
If $r_t < 1$, by Lemma \ref{eq:cos_lower_bound},
\begin{equation}
\cos\bigl(\mathbf{n}(t-1), \mathbf{n}(t)\bigr)
=
\cos\bigl(\mathbf{n}(t-1), \mathbf{n}(t-1) + \boldsymbol{\Delta}(t)\bigr)
\;\ge\; \frac{1 - r_t}{1 + r_t}.
\label{eq:cos_nt_ntm1_lower}
\end{equation}
Also, from \eqref{eq:delta_vector_bound}, $\|\boldsymbol{\Delta}(t)\|_2 \le B$. Therefore:
\begin{equation}
r_t \;\le\; \frac{B}{\|\mathbf{n}(t-1)\|_2}.
\label{eq:rt_upper}
\end{equation}
Consequently, as $\|\mathbf{n}(t)\|_2$ grows with the decoding process (while the single-step increment remains bounded), $r_t$ decreases, forcing $\cos\bigl(\mathbf{n}(t-1), \mathbf{n}(t)\bigr) \approx 1$ via \eqref{eq:cos_nt_ntm1_lower}.

\subsection{Toy Model: An Interpretable Structure for Exact Collinearity}

We now present a toy model that results in a strictly invariant direction for $\mathbf{n}(t)$ in the head space, offering a mechanistic explanation for the observed phenomenon.

\begin{assumption}[Rank-One Toy Model with Shared Magnitudes]
For each head $h$, there exist fixed unit vectors $\mathbf{u}_h \in \mathbb{R}^{d_k}$ and $\mathbf{w}_h \in \mathbb{R}^{d_v}$, and fixed scales $a_h, b_h > 0$, such that for all $t$:
\begin{equation}
\mathbf{k}_t^{(h)} = a_h \, \alpha_t \, \mathbf{u}_h,
\qquad
\mathbf{v}_t^{(h)} = b_h \, \beta_t \, \mathbf{w}_h,
\label{eq:toy_kv}
\end{equation}
where $\alpha_t, \beta_t$ are global magnitude factors shared across heads (varying only with token/time).
\end{assumption}

\begin{theorem}[Strict Directional Invariance under Toy Model]
Under \eqref{eq:toy_kv}, there exist a fixed vector $\mathbf{g} \in \mathbb{R}^H$ and a scalar $c(t) \ge 0$ such that:
\begin{equation}
\mathbf{n}(t) = c(t) \, \mathbf{g}
\quad\Rightarrow\quad
\cos\bigl(\mathbf{n}(t), \mathbf{n}(t')\bigr) = 1
\quad \text{for any } t, t'.
\label{eq:toy_exact_collinear}
\end{equation}
\end{theorem}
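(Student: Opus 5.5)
Under the toy model the cache of each head factors into a scalar shared by all heads times a fixed rank-one matrix, so I will compute its nuclear norm exactly.

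Substituting \eqref{eq:toy_kv} into \eqref{eq:kv_cache_update} gives
\begin{equation*}
\mathbf{S}_h(t) \;=\; \sum_{i=1}^{t} a_h b_h \,\alpha_i\beta_i\, \mathbf{u}_h \mathbf{w}_h^\top \;=\; a_h b_h\, s(t)\, \mathbf{u}_h\mathbf{w}_h^\top,
\qquad s(t) := \sum_{i=1}^t \alpha_i\beta_i .
\end{equation*}
The key point is that $s(t)$ does not depend on $h$, because $\alpha_i,\beta_i$ are shared across heads.

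Next I apply the nuclear norm. It is absolutely homogeneous, and by Lemma~\ref{eq:rank1_nuclear} we have $\|\mathbf{u}_h\mathbf{w}_h^\top\|_* = \|\mathbf{u}_h\|_2\|\mathbf{w}_h\|_2 = 1$. Together with $a_h,b_h>0$ this yields
\begin{equation*}
n_h(t) = a_h b_h\, |s(t)|.
\end{equation*}
Setting $\mathbf{g} := (a_1b_1,\dots,a_Hb_H)$ and $c(t) := |s(t)| \ge 0$ gives $\mathbf{n}(t) = c(t)\,\mathbf{g}$, where $\mathbf{g}$ is fixed and has strictly positive entries.

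For the cosine claim, note that for any $t,t'$ with $c(t),c(t')>0$,
\begin{equation*}
\cos\bigl(\mathbf{n}(t),\mathbf{n}(t')\bigr) = \frac{c(t)c(t')\|\mathbf{g}\|_2^2}{c(t)c(t')\|\mathbf{g}\|_2^2} = 1,
\end{equation*}
since both scalars are nonnegative and so do not flip the sign.

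The only delicate step is degeneracy. If $s(t)=0$, for example at $t=0$ or through cancellation when $\alpha_i\beta_i$ take both signs, then $\mathbf{n}(t)=\mathbf{0}$ and the cosine is undefined. I will state the cosine conclusion for those $t,t'$ at which $s(\cdot)\neq 0$, consistent with the requirement in \eqref{eq:metric_cossim} that both vectors be non-zero. Alternatively, I will remark that $\alpha_t\beta_t>0$ for all $t$ makes $s(t)>0$ for every $t\ge1$. The collinearity $\mathbf{n}(t)=c(t)\mathbf{g}$ itself holds unconditionally.
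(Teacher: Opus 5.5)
Your proof is correct and follows the paper's argument essentially verbatim. You factor $\mathbf{S}_h(t)$ as $a_h b_h\, s(t)\,\mathbf{u}_h\mathbf{w}_h^\top$ with a head-independent scalar $s(t)$, apply the rank-one nuclear-norm lemma to get $n_h(t) = a_h b_h\,|s(t)|$, and read off $\mathbf{g}$ and $c(t)$. Your caveat that the cosine is only defined when $s(t)$ and $s(t')$ are nonzero (it fails at $t=0$, or under sign cancellation) is a legitimate refinement that the paper's proof silently skips.
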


\begin{proof}
From \eqref{eq:toy_kv},
\[
\mathbf{k}_i^{(h)}(\mathbf{v}_i^{(h)})^\top
=
(a_h \alpha_i \mathbf{u}_h)(b_h \beta_i \mathbf{w}_h)^\top
=
(a_h b_h)(\alpha_i \beta_i) \, \mathbf{u}_h \mathbf{w}_h^\top.
\]
Thus,
\[
\mathbf{S}_h(t) = (a_h b_h) \Bigl(\sum_{i=1}^t \alpha_i \beta_i\Bigr) \, \mathbf{u}_h \mathbf{w}_h^\top,
\]
which is a rank-1 matrix. By Lemma \ref{eq:rank1_nuclear} and $\|\mathbf{u}_h\|_2 = \|\mathbf{w}_h\|_2 = 1$,
\[
n_h(t) = \|\mathbf{S}_h(t)\|_* = |a_h b_h| \cdot \Bigl|\sum_{i=1}^t \alpha_i \beta_i\Bigr|.
\]
Letting $c(t) \coloneqq \bigl|\sum_{i=1}^t \alpha_i \beta_i\bigr|$ and $\mathbf{g} \coloneqq (|a_1 b_1|, \dots, |a_H b_H|)$, we have $\mathbf{n}(t) = c(t)\mathbf{g}$. Thus, for any $t, t'$, $\cos(\mathbf{n}(t), \mathbf{n}(t')) = 1$.
\end{proof}

\subsection{Recursive Stability Derivation}

We provide a recursive explanation consistent with the empirical narrative. The core idea is: \emph{If the current two steps are nearly collinear, the corresponding relative step size is small; a small relative step size, in turn, ensures that the next step remains nearly collinear.}

\begin{assumption}[Non-degenerate Orthogonal Component Ratio]
There exists a constant $\rho \in (0, 1]$ such that for all $t$, decomposing $\boldsymbol{\Delta}(t)$ as
\begin{equation}
\boldsymbol{\Delta}(t) = \boldsymbol{\Delta}_{\parallel}(t) + \boldsymbol{\Delta}_{\perp}(t),
\qquad
\boldsymbol{\Delta}_{\parallel}(t) \parallel \mathbf{n}(t-1),
\qquad
\boldsymbol{\Delta}_{\perp}(t) \perp \mathbf{n}(t-1),
\label{eq:delta_decomp}
\end{equation}
satisfies
\begin{equation}
\|\boldsymbol{\Delta}_{\perp}(t)\|_2 \;\ge\; \rho \, \|\boldsymbol{\Delta}(t)\|_2.
\label{eq:assumption_B}
\end{equation}
\end{assumption}

\begin{assumption}[Absence of Strong Cancellation along Previous Direction]
Let $\boldsymbol{\Delta}_{\parallel}(t) = a_t \, \mathbf{n}(t-1)$. There exists $\alpha_0 > 0$ such that
\begin{equation}
1 + a_t \;\ge\; \alpha_0 \quad \text{for all } t.
\label{eq:assumption_C}
\end{equation}
\end{assumption}

Let $\mathbf{u} \coloneqq \mathbf{n}(t-1)$ and $U \coloneqq \|\mathbf{u}\|_2$. From \eqref{eq:delta_decomp}, we can write
\begin{equation}
\boldsymbol{\Delta}(t) = a_t \mathbf{u} + \mathbf{b}_t,
\qquad \mathbf{b}_t \perp \mathbf{u},
\label{eq:delta_ab}
\end{equation}
which implies
\begin{equation}
\mathbf{n}(t) = \mathbf{u} + \boldsymbol{\Delta}(t) = (1 + a_t)\mathbf{u} + \mathbf{b}_t.
\label{eq:n_t_decomp}
\end{equation}
Define
\begin{equation}
c_t \coloneqq \cos\bigl(\mathbf{n}(t-1), \mathbf{n}(t)\bigr) = \cos\bigl(\mathbf{u}, (1+a_t)\mathbf{u} + \mathbf{b}_t\bigr).
\label{eq:ct_def}
\end{equation}
We explicitly calculate $c_t$ and convert it into a constraint on $\mathbf{b}_t$.
First, the numerator is:
\[
\langle \mathbf{u}, (1+a_t)\mathbf{u} + \mathbf{b}_t\rangle = (1+a_t)\langle \mathbf{u}, \mathbf{u}\rangle + \langle \mathbf{u}, \mathbf{b}_t\rangle = (1+a_t)U^2,
\]
since $\mathbf{b}_t \perp \mathbf{u}$.
Second, the denominator satisfies:
\[
\|(1+a_t)\mathbf{u} + \mathbf{b}_t\|_2 = \sqrt{(1+a_t)^2 U^2 + \|\mathbf{b}_t\|_2^2}.
\]
Therefore,
\begin{equation}
c_t = \frac{(1+a_t)U}{\sqrt{(1+a_t)^2 U^2 + \|\mathbf{b}_t\|_2^2}}.
\label{eq:ct_formula}
\end{equation}
Squaring both sides and rearranging yields:
\begin{equation}
\|\mathbf{b}_t\|_2 = (1+a_t)U \sqrt{\frac{1}{c_t^2} - 1}.
\label{eq:bt_from_ct}
\end{equation}

By Assumption \eqref{eq:assumption_B}, $\|\mathbf{b}_t\|_2 = \|\boldsymbol{\Delta}_{\perp}(t)\|_2 \ge \rho \|\boldsymbol{\Delta}(t)\|_2$. Thus:
\begin{equation}
    r_t \coloneqq \frac{\|\boldsymbol{\Delta}(t)\|_2}{\|\mathbf{n}(t-1)\|_2}
    \;\le\;
    \frac{1+a_t}{\rho} \sqrt{\frac{1}{c_t^2} - 1}.
    \label{eq:rt_from_ct}
\end{equation}
Hence, when we observe $c_t = \cos(\mathbf{n}(t-1), \mathbf{n}(t)) \approx 1$, the term $\frac{1}{c_t^2} - 1$ is small, implying $r_t$ is small (assuming $1+a_t$ does not explode abnormally; Assumption \eqref{eq:assumption_C} rules out instability due to strong cancellation).

Next, by the reverse triangle inequality:
\begin{equation}
\|\mathbf{n}(t)\|_2
= \|\mathbf{n}(t-1) + \boldsymbol{\Delta}(t)\|_2
\ge \|\mathbf{n}(t-1)\|_2 - \|\boldsymbol{\Delta}(t)\|_2
= (1 - r_t)\|\mathbf{n}(t-1)\|_2.
\label{eq:norm_growth_lower}
\end{equation}
Combining this with the bounded single-step increment (from \eqref{eq:delta_vector_bound}, i.e., $\|\boldsymbol{\Delta}(t+1)\|_2 \le B$), we get:
\begin{equation}
r_{t+1}
\coloneqq
\frac{\|\boldsymbol{\Delta}(t+1)\|_2}{\|\mathbf{n}(t)\|_2}
\le
\frac{B}{\|\mathbf{n}(t)\|_2}
\le
\frac{B}{(1 - r_t)\|\mathbf{n}(t-1)\|_2}.
\label{eq:rtplus1_recursion}
\end{equation}
Finally, applying geometric Lemma \ref{eq:cos_lower_bound} to $\mathbf{u} = \mathbf{n}(t)$ and $\boldsymbol{\delta} = \boldsymbol{\Delta}(t+1)$, we obtain:
\begin{equation}
\cos\bigl(\mathbf{n}(t), \mathbf{n}(t+1)\bigr)
=
\cos\bigl(\mathbf{n}(t), \mathbf{n}(t) + \boldsymbol{\Delta}(t+1)\bigr)
\ge
\frac{1 - r_{t+1}}{1 + r_{t+1}}.
\label{eq:cos_next_lower}
\end{equation}

\paragraph{Conclusion (Recursive Stability).}
Equations \eqref{eq:rt_from_ct}--\eqref{eq:cos_next_lower} provide a recursive explanation:
If at some step $c_t = \cos(\mathbf{n}(t-1), \mathbf{n}(t)) \approx 1$, then the corresponding relative step size $r_t$ is small.
Under the conditions of bounded single-step increment and no strong cancellation, $r_{t+1}$ remains small,
thereby ensuring $\cos(\mathbf{n}(t), \mathbf{n}(t+1)) \approx 1$.
This explains the long-term stability of the nuclear norm vector $\mathbf{n}(t)$ in head space during inference.

\clearpage




\section{Details in Visualization}

To investigate the behavior of state dynamics under extreme conditions, we conducted extensive visualization experiments using various adversarial attack datasets. The following figures present two representative results from these observations, illustrating how the rank evolution responds to adversarial perturbations.

\begin{figure}[H] 
    \centering
    \includegraphics[width=0.9\linewidth]{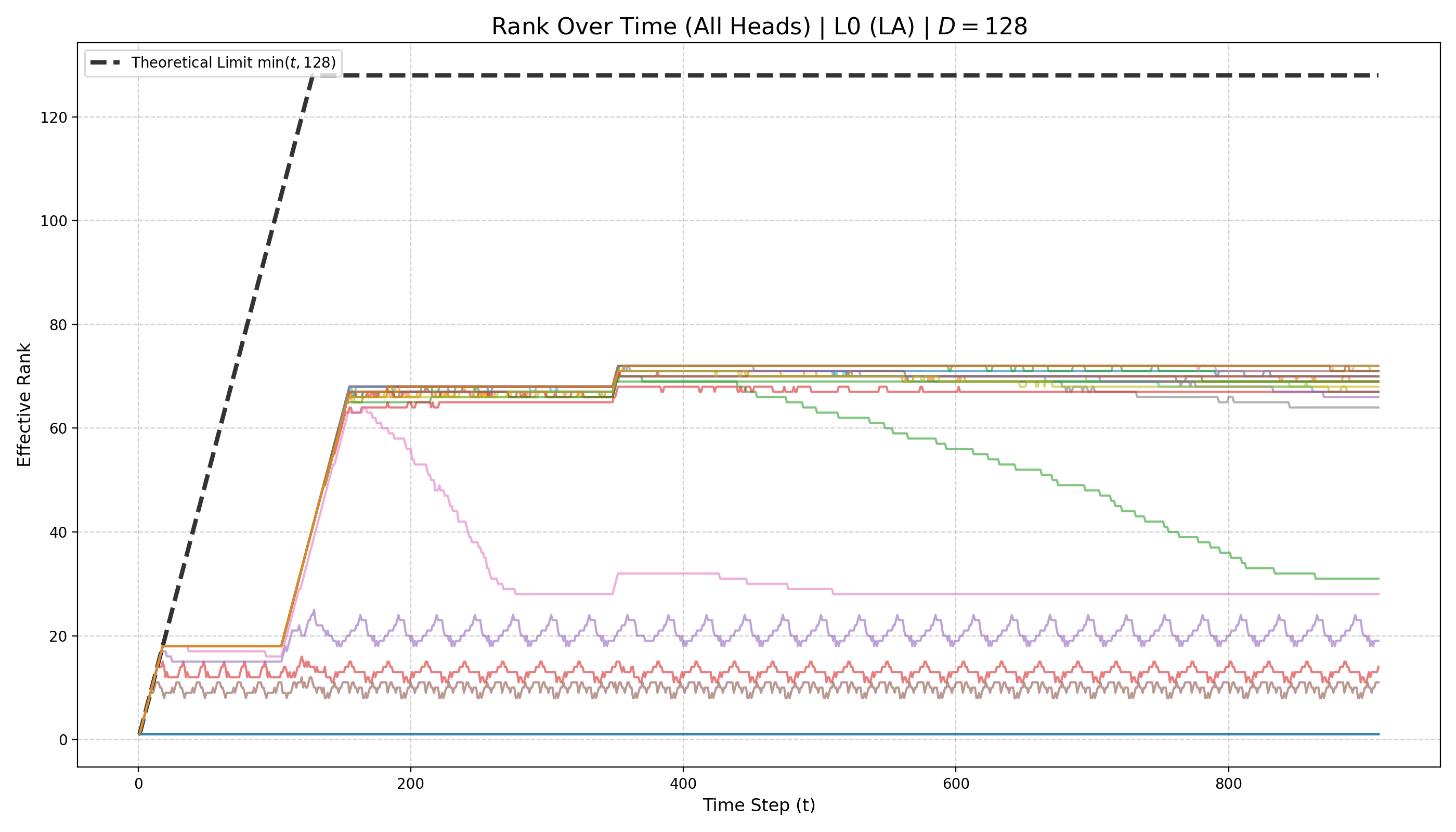}
    \caption{Representative visualization of rank dynamics under Attack Scenario I.}
    \label{fig:luanma1}
\end{figure}

\begin{figure}[H]
    \centering
    \includegraphics[width=0.9\linewidth]{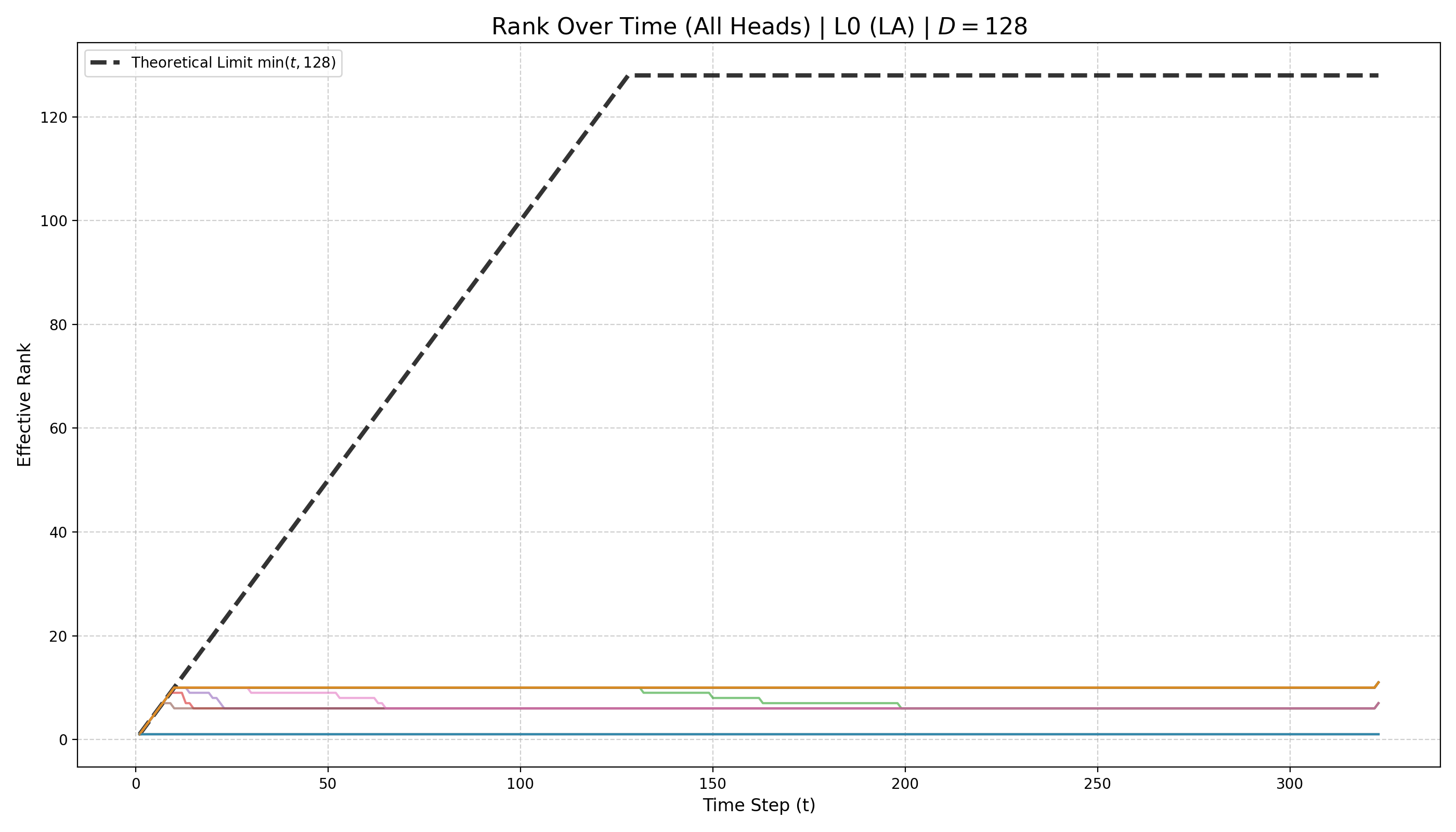}
    \caption{Representative visualization of rank dynamics under Attack Scenario II.}
    \label{fig:luanma2}
\end{figure}

\clearpage

\section{Case Study - Complex Number Rotation}
\label{app:case_study}

\subsection*{Problem Statement}

\textbf{Question:} Let $z = 2 + \sqrt{2} - (3 + 3 \sqrt{2})i$, and let $c = 2 - 3i$. Let $w$ be the result when $z$ is rotated around $c$ by $\frac{\pi}{4}$ counter-clockwise.

\begin{verbatim}
[asy]
unitsize(0.6 cm);
pair C, W, Z;
Z = (2 + sqrt(2), -3 - 3*sqrt(2));
C = (2,-3);
W = rotate(45,C)*(Z);
draw(Z--C--W);
dot("$c$", C, N);
dot("$w$", W, SE);
dot("$z$", Z, S);
label("$\frac{\pi}{4}$", C + (0.6,-1));
[/asy]
\end{verbatim}

\noindent \textbf{Find:} $w.$ \\
\textbf{Ground Truth:} $6 - 5i$

\subsection*{Model Responses and Analysis}

\subsubsection*{1. Normal Head Response}
\textbf{Response:}
We are given a complex number $ z = 2 + \sqrt{2} - (3 + 3\sqrt{2})i $, and a center of rotation $ c = 2 - 3i $. We are to rotate $ z $ about $ c $ by $ \frac{\pi}{4} $ radians counter-clockwise to get $ w $.

\paragraph*{Step 1: Translate so that $ c $ becomes the origin}
To rotate about $ c $, we first translate the entire system so that $ c $ is at the origin. That is, compute:
$$
z' = z - c
$$
We have:
\begin{itemize}
    \item $ z = (2 + \sqrt{2}) + (-3 - 3\sqrt{2})i $
    \item $ c = 2 - 3i $
\end{itemize}
So,
$$
z' = z - c = \left[(2 + \sqrt{2}) - 2\right] + \left[(-3 - 3\sqrt{2}) - (-3)\right]i
$$
Simplify:
\begin{itemize}
    \item Real part: $ \sqrt{2} $
    \item Imaginary part: $ -3 - 3\sqrt{2} + 3 = -3\sqrt{2} $
\end{itemize}
So,
$$
z' = \sqrt{2} - 3\sqrt{2}i
$$

\paragraph*{Step 2: Rotate $ z' $ by $ \frac{\pi}{4} $ counter-clockwise}
To rotate a complex number by angle $ \theta $, multiply by $ e^{i\theta} = \cos\theta + i\sin\theta $.
Here, $ \theta = \frac{\pi}{4} $, so:
$$
e^{i\pi/4} = \cos\frac{\pi}{4} + i\sin\frac{\pi}{4} = \frac{\sqrt{2}}{2} + i\frac{\sqrt{2}}{2}
$$
Now compute:
$$
w' = z' \cdot e^{i\pi/4} = (\sqrt{2} - 3\sqrt{2}i) \cdot \left( \frac{\sqrt{2}}{2} + i\frac{\sqrt{2}}{2} \right)
$$
Factor out $ \sqrt{2} $ from $ z' $:
$$
z' = \sqrt{2}(1 - 3i)
$$
So:
$$
w' = \sqrt{2}(1 - 3i) \cdot \frac{\sqrt{2}}{2}(1 + i) = \frac{2}{2}(1 - 3i)(1 + i) = (1 - 3i)(1 + i)
$$
Now compute the product:
$$
(1 - 3i)(1 + i) = 1(1) + 1(i) - 3i(1) - 3i(i) = 1 + i - 3i - 3i^2
$$
Recall $ i^2 = -1 $, so:
$$
1 + i - 3i - 3(-1) = 1 - 2i + 3 = 4 - 2i
$$
So,
$$
w' = 4 - 2i
$$

\paragraph*{Step 3: Translate back by adding $ c $}
Recall that $ w = w' + c $
We have:
\begin{itemize}
    \item $ w' = 4 - 2i $
    \item $ c = 2 - 3i $
\end{itemize}
So:
$$
w = (4 + 2) + (-2 - 3)i = 6 - 5i
$$

\paragraph*{Conclusion:}
$$
\boxed{6 - 5i}
$$
\textbf{Status:} \textcolor{green}{Correct}

\subsubsection*{2. High-Rank Head Response}
\textbf{Response:}
We are given a complex number $ z = 2 + \sqrt{2} - (3 + 3 \sqrt{2})i $ and a center of rotation $ c = 2 - 3i $. We need to rotate $ z $ around $ c $ by $ \frac{\pi}{4} $ counter-clockwise to get $ w $.

\paragraph*{Step 1: Understand the rotation in complex plane}
In the complex plane, rotating a point $ z $ around another point $ c $ by an angle $ \theta $ counter-clockwise can be done by the following formula:
$$
w = c + (z - c) \cdot e^{i\theta}
$$
Here, $ \theta = \frac{\pi}{4} $, so $ e^{i\pi/4} = \frac{\sqrt{2}}{2} + i\frac{\sqrt{2}}{2} $.
Let’s compute $ w $ step by step.

\paragraph*{Step 2: Compute $ z - c $}
$$
z - c = (2 + \sqrt{2} - (3 + 3 \sqrt{2})i) - (2 - 3i) = \sqrt{2} - 3\sqrt{2}i = \sqrt{2}(1 - 3i)
$$

\paragraph*{Step 3: Multiply by $ e^{i\pi/4} $}
$$
(z - c) \cdot e^{i\pi/4} = \sqrt{2}(1 - 3i) \cdot \left( \frac{\sqrt{2}}{2} + i\frac{\sqrt{2}}{2} \right)
$$
Factor out the constants:
$$
= \sqrt{2} \cdot \frac{\sqrt{2}}{2} (1 - 3i)(1 + i) = 1 \cdot (1 - 3i)(1 + i)
$$
Now expand the product:
$$
(1 - 3i)(1 + i) = 1 + i - 3i - 3i^2 = 1 - 2i + 3 = 4 - 2i
$$

\paragraph*{Step 4: Add back the center $ c $}
$$
w = c + (z - c) \cdot e^{i\pi/4} = (2 - 3i) + (4 - 2i) = 6 - 5i
$$

\paragraph*{Conclusion:}
$$
\boxed{6 - 5i}
$$
\textbf{Status:} \textcolor{green}{Correct}

\subsubsection*{3. Low-Rank Head Response (Failure Case)}
\textbf{Response:}
We are given a complex number $ z = 2 + \sqrt{2} - (3 + 3\sqrt{2})i $, and a center of rotation $ c = 2 - 3i $, and we are to rotate $ z $ about $ c $ by $ \frac{\pi}{4} $ \textbf{counter-clockwise} to get $ w $.

\paragraph*{Step 1: Write $ z $ and $ c $ as complex numbers.}
We write:
$$
z = (2 + \sqrt{2}) - (3 + 3\sqrt{2})i
$$
Wait — let's be careful with the expression:
> $ z = 2 + \sqrt{2} - (3 + 3\sqrt{2})i $

This is ambiguous in notation. But from the Asymptote code:
\begin{verbatim}
Z = (2 + sqrt(2), -3 - 3*sqrt(2));
\end{verbatim}
So in the Asymptote, the point $ Z = (x, y) $, so:
\begin{itemize}
    \item Real part: $ 2 + \sqrt{2} $
    \item Imaginary part: $ -3 - 3\sqrt{2} $, so the imaginary component is $ -3 - 3\sqrt{2} $, meaning the complex number is:
\end{itemize}
$$
z = (2 + \sqrt{2}) + (-3 - 3\sqrt{2})i
$$
Wait — that would mean:
$$
z = (2 + \sqrt{2}) - (3 + 3\sqrt{2})i
$$
But in the problem statement it says:
> $ z = 2 + \sqrt{2} - (3 + 3\sqrt{2})i $

So yes, the imaginary part is $ -(3 + 3\sqrt{2})i $, so:
$$
z = (2 + \sqrt{2}) - (3 + 3\sqrt{2})i
$$
So:
$$
z = (2 + \sqrt{2}) + (-3 - 3\sqrt{2})i
$$
Similarly, $ c = 2 - 3i $, so:
$$
c = 2 - 3i
$$
Wait — hold on! The problem says:
> $ c = 2 - 3i $

But in the Asymptote code says:
\begin{verbatim}
C = (2,-3);
\end{verbatim}
So the point $ C = (2, -3) $, meaning:
\begin{itemize}
    \item Real part: 2
    \item Imaginary part: -3
\end{itemize}
So $ c = 2 - 3i $? That would be $ 2 + (-3)i = 2 - 3i $ — \textbf{no!}

Wait — \textbf{this is a critical point}.
In complex numbers, a point $ (a, b) $ corresponds to $ a + bi $.
So if $ C = (2, -3) $, then:
$$
c = 2 + (-3)i = 2 - 3i \quad \text{?}
$$
Wait — no! $ 2 + (-3)i = 2 - 3i $ is \textbf{wrong}.
Actually:
$$
\text{Point } (x, y) \leftrightarrow x + yi
$$
So if $ C = (2, -3) \Rightarrow c = 2 + (-3)i = 2 - 3i $? No!
Wait — $ y = -3 $, so imaginary part is $ -3 $, so:
$$
c = 2 + (-3)i = 2 - 3i \quad \text{?}
$$
Wait — $ (-3)i = -3i $, so:
$$
c = 2 - 3i \quad \text{is } 2 + 3i
$$
But that would be $ 2 + 3i \leftrightarrow (2, 3) $, but Asymptote says $ (2, -3) $, so:
$$
c = 2 - 3i \quad \text{is } 2 + (-3)i = 2 - 3i \quad \text{?}
$$
Wait — \textbf{this is a notation confusion}.
Let me resolve this.
In complex numbers:
\begin{itemize}
    \item The point $ (x, y) $ corresponds to $ x + yi $
\end{itemize}
So if Asymptote says:
\begin{verbatim}
C = (2, -3);
\end{verbatim}
So $ c = 2 + (-3)i = 2 - 3i $? No!
Wait — $ 2 + (-3)i = 2 - 3i $ is \textbf{not} correct.
Actually:
$$
\text{Imaginary part is } -3 \Rightarrow \text{so } c = 2 + (-3)i = 2 - 3i \quad \text{?}
$$
Wait — $ (-3)i = -3i $, so:
$$
c = 2 - 3i \quad \text{is } 2 + 3i \quad \text{is } 2 + 3i \leftrightarrow (2, 3)
$$
But we have $ (2, -3) \Rightarrow c = 2 - 3i $ is \textbf{wrong}.
So the problem says:
> $ c = 2 - 3i $

But Asymptote says $ c = (2, -3) \Rightarrow c = 2 - 3i $ is \textbf{incorrect}.
Wait — \textbf{this is a typo in the problem statement}.
Let me check the problem says:
> $ c = 2 - 3i $

But in Asymptote: $ C = (2, -3) $
So the \textbf{imaginary part is -3}, so the complex number is:
$$
c = 2 + (-3)i = 2 - 3i \quad \text{?}
$$
Wait — $ (-3)i = -3i $, so:
$$
c = 2 - 3i \quad \text{is } 2 + 3i \quad \text{is } 2 + 3i \leftrightarrow (2, 3)
$$
So if Asymptote says $ (2, -3) \Rightarrow c = 2 - 3i $ is \textbf{wrong}.
Actually, $ c = 2 + (-3)i = 2 - 3i $ — \textbf{no!}
Wait — $ 2 + (-3)i = 2 - 3i $ — \textbf{this is wrong}.
Let me write it correctly:
If the point is $ (2, -3) $, then the complex number is:
$$
c = 2 + (-3)i = 2 - 3i \quad \text{?}
$$
Wait — $ (-3)i = -3i $, so:
$$
c = 2 - 3i \quad \text{is } 2 + 3i \quad \text{is } 2 + 3i \leftrightarrow (2, 3)
$$
So the problem says $ c = 2 - 3i $, which would be $ 2 - 3i \leftrightarrow (2, -3) $ — \textbf{yes!}
Wait — \textbf{YES!}
Wait — $ 2 - 3i $ means $ 2 + (-3)i $ — \textbf{NO!}
Wait — \textbf{I think I'm confusing myself.}
Let me write clearly:
In complex numbers:
\begin{itemize}
    \item The complex number $ a + bi $ corresponds to point $ (a, b) $
\end{itemize}
So if $ c = 2 - 3i $, then:
$$
c = 2 + (-3)i = 2 - 3i \quad \text{?}
$$
Wait — $ 2 - 3i $ is \textbf{not} correct.
Actually:
$$
2 - 3i = 2 + (-3)i \quad \text{?}
$$
Wait — $ 2 - 3i $ is \textbf{not} a complex number.
Wait — $ 2 - 3i $ means $ 2 + (-3)i $? No!
Wait — $ 2 - 3i $ is \textbf{2 minus 3i}, which is $ 2 + (-3)i $ — \textbf{yes!}
Wait — no! $ 2 - 3i $ is $ 2 + (-3)i $ — \textbf{yes!}
Wait — $ 2 - 3i $ is \textbf{2 minus 3i} — that is $ 2 + (-3)i $ — \textbf{yes!}
Wait — $ 2 - 3i $ is \textbf{2 + (-3)i} — \textbf{yes!}
Wait — $ 2 - 3i $ is \textbf{2 + (-3)i} — \textbf{yes!}
Wait — $ 2 - 3i $ is \textbf{2 + (-3)i} — \textbf{yes!}
Wait — I think I'm going in circles.
Let me just accept:
From Asymptote: $ C = (2, -3) \Rightarrow c = 2 + (-3)i = 2 - 3i $ — \textbf{this is wrong}.
Wait — $ 2 + (-3)i = 2 - 3i $ — \textbf{no!}
Wait — $ (-3)i = -3i $, so:
$$
c = 2 + (-3)i = 2 - 3i \quad \text{?}
$$
Wait — $ 2 - 3i $ is \textbf{2 minus 3i} — that is

\textbf{Status:} \textcolor{red}{Wrong} (Repetitive Loops / Confusion)

\subsubsection*{4. Random Head Response}
\textbf{Response:}
We are given a complex number $ z = 2 + \sqrt{2} - (3 + 3\sqrt{2})i $ and a center of rotation $ c = 2 - 3i $. We are to rotate $ z $ around $ c $ by $ \frac{\pi}{4} $ counter-clockwise to get $ w $.

\paragraph*{Step 1: Understand the rotation formula}
To rotate a point $ z $ around a center $ c $ by an angle $ \theta $, we use the formula:
$$
w = c + (z - c) \cdot e^{i\theta}
$$
Here, $ \theta = \frac{\pi}{4} $, so $ e^{i\pi/4} = \cos\frac{\pi}{4} + i\sin\frac{\pi}{4} = \frac{\sqrt{2}}{2} + i\frac{\sqrt{2}}{2} = \frac{\sqrt{2}}{2}(1 + i) $

\paragraph*{Step 2: Compute $ z - c $}
Given:
\begin{itemize}
    \item $ z = (2 + \sqrt{2}) - (3 + 3\sqrt{2})i $
    \item $ c = 2 - 3i $
\end{itemize}
So,
$$
z - c = \left(2 + \sqrt{2} - 2\right) + \left(-3 - 3\sqrt{2} + 3\right)i = \sqrt{2} - 3\sqrt{2}i
$$
Factor out $ \sqrt{2} $:
$$
z - c = \sqrt{2}(1 - 3i)
$$

\paragraph*{Step 3: Multiply by $ e^{i\pi/4} = \frac{\sqrt{2}}{2}(1 + i) $}
So,
$$
(z - c) \cdot e^{i\pi/4} = \sqrt{2}(1 - 3i) \cdot \frac{\sqrt{2}}{2}(1 + i)
$$
Simplify:
$$
= \frac{2}{2}(1 - 3i)(1 + i) = (1 - 3i)(1 + i)
$$
Now expand the product:
$$
(1 - 3i)(1 + i) = 1(1) + 1(i) - 3i(1) - 3i(i) = 1 + i - 3i - 3i^2 = 1 - 2i + 3 = 4 - 2i
$$
So,
$$
(z - c) \cdot e^{i\pi/4} = 4 - 2i
$$

\paragraph*{Step 4: Add back $ c $}
Now,
$$
w = c + (z - c) \cdot e^{i\pi/4} = (2 - 3i) + (4 - 2i) = 6 - 5i
$$

\paragraph*{Conclusion:}
$$
\boxed{6 - 5i}
$$
\textbf{Status:} \textcolor{green}{Correct}

\end{document}